\documentclass[lettersize,journal]{IEEEtran}
\usepackage[
    debug,
    version=1,
    shorthands,
    ]{preamble}

\title{Kolmogorov-Arnold causal generative models}
\author{
  \centerline{Alejandro Almodóvar \orcidlink{0009-0006-0900-4026}
  \quad Mar Elizo \orcidlink{0009-0006-0703-4387}
  \quad Patricia A. Apellániz \orcidlink{0000-0002-8604-9758} \quad Santiago Zazo \orcidlink{0000-0001-9073-7927} \quad Juan Parras \orcidlink{0000-0002-7028-3179}}
}

\begin{document}
\maketitle

\begin{abstract}
Causal generative models provide a principled framework for answering observational, interventional, and counterfactual queries from observational data. However, many deep causal models rely on highly expressive architectures with opaque mechanisms, limiting auditability in high-stakes domains.
We propose \ours, a causal generative model for mixed-type tabular data where each structural equation is parameterized by a Kolmogorov--Arnold Network (KAN). This decomposition enables direct inspection of learned causal mechanisms, including symbolic approximations and visualization of parent–child relationships, while preserving query-agnostic generative semantics.
We introduce a validation pipeline based on distributional matching and independence diagnostics of inferred exogenous variables, allowing assessment using observational data alone. Experiments on synthetic and semi-synthetic benchmarks show competitive performance against state-of-the-art methods. A real-world cardiovascular case study further demonstrates the extraction of simplified structural equations and interpretable causal effects.
These results suggest that expressive causal generative modeling and functional transparency can be achieved jointly, supporting trustworthy deployment in tabular decision-making settings.
\end{abstract}

\begin{IEEEkeywords}
Causal inference, Kolmogorov-Arnold networks, Interpretability, Generative models, Deep learning.
\end{IEEEkeywords}
\section{Introduction}
\label{sec:introduction}

\IEEEPARstart{M}{achine} learning systems are increasingly deployed in high-stakes domains such as personalized medicine~\citep{KentBMJ2018,sanchez2022causal}, public policy~\citep{Imai_Strauss_2011}, and economics~\citep{Manski2004STRHP}, where decisions must be justified, reproducible, and robust to distributional shifts. In these settings, estimating individualized treatment effects and answering counterfactual questions from observational tabular data is central to decision-making. Structural causal models (SCMs) provide a formal framework for such reasoning, allowing the computation of observational, interventional, and counterfactual quantities under explicit assumptions~\citep{pearl2009causality,peters2017elements}. Recovering a reliable approximation of the underlying SCM is therefore essential for principled policy targeting and individualized care~\citep{Curth2024Individualize}.

However, predictive accuracy alone is not sufficient for deployment in sensitive applications. Regulatory frameworks such as the General Data Protection Regulation (GDPR Art.~22)~\citep{GDPR2016}, the European Union Artificial Intelligence Act~\citep{EUAIACT2024}, and guidance from the U.S. Food and Drug Administration on AI-enabled medical devices~\citep{usfda2024} emphasize transparency and accountability requirements for high-risk AI systems. The ``right to explanation'' and related transparency obligations further reinforce the need for models whose internal logic can be inspected and communicated~\citep{goodman2017european}. In clinical practice, limited interpretability remains a major barrier to the adoption of machine learning systems~\citep{TonekaboniMLHC2019,AmannBMC2020}. As argued by~\citet{rudin2019stop}, black-box models are particularly problematic when decisions have substantial real-world consequences and interpretable alternatives are feasible. In the causal setting, interpretability is even more critical: practitioners must understand how interventions propagate through mechanisms, not merely observe predictive correlations.

Recent advances in causal generative models (CGMs) combine deep generative modeling with SCMs, and demonstrate that flexible neural architectures can approximate structural mechanisms while preserving do-operator semantics~\citep{pawlowski2020dscm,sanchez2022vaca, javaloy2023causal, chao2024dcm}. These approaches are query-agnostic: once trained, they can answer arbitrary observational, interventional, and counterfactual queries without retraining. Despite these advances, most existing CGMs rely on highly expressive neural networks whose learned structural equations are opaque. While the causal graph itself provides structural interpretability, the functional form of each mechanism remains difficult to audit, simplify, or communicate. 

\begin{figure}[t]
\centering
\includegraphics[width=0.9\linewidth]{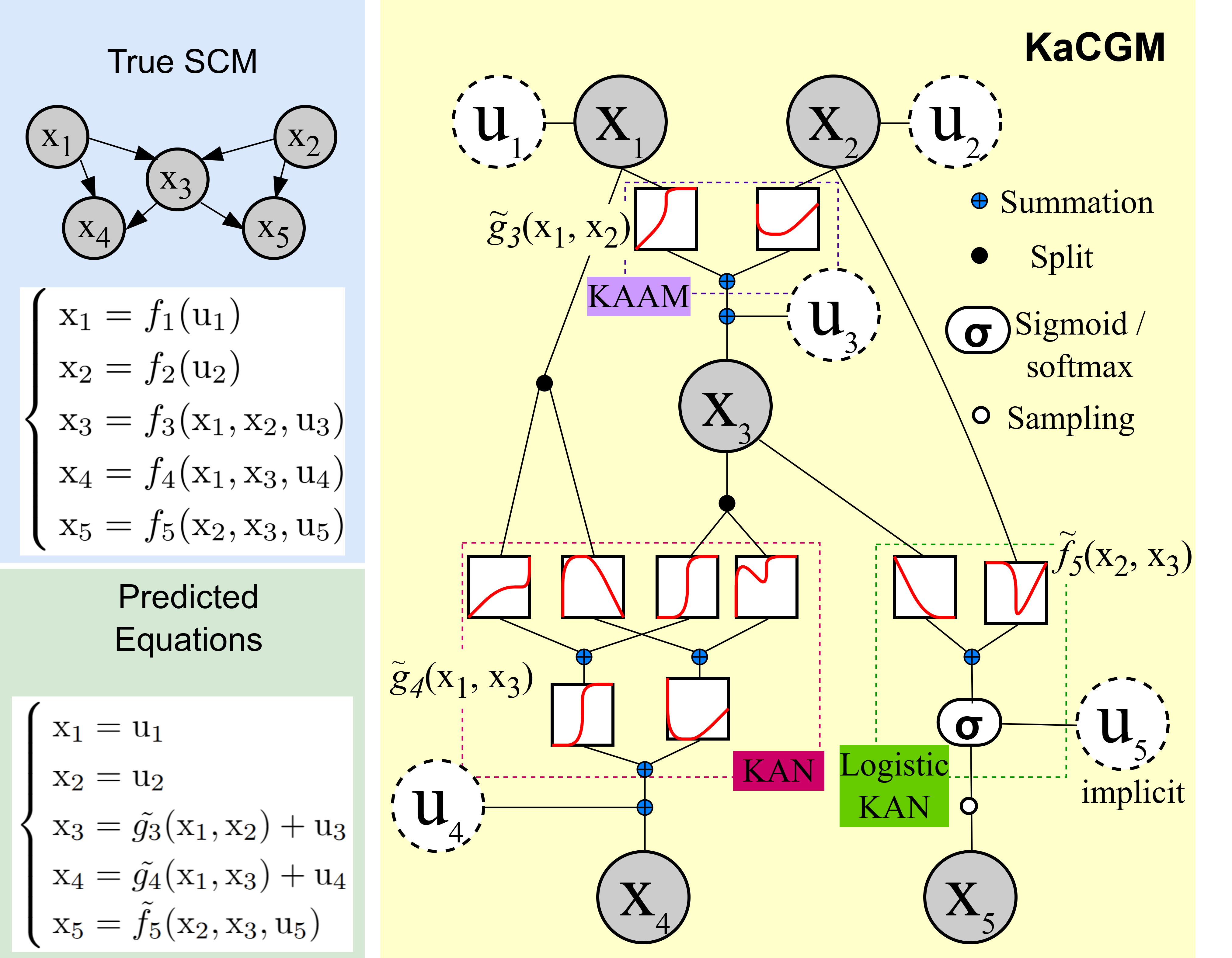}
\caption{Compacted sketch of the proposed causal generative model, \ours, including a discrete variable ($\covariatei_5$). The true SCM, with unknown structural equations (left), is approximated with additive noise models in which the functions of each endogenous variable are modeled by KANs (right). Several KANs are shown, in which interpretability capabilities decrease with network complexity. As explained in \cref{sec:method}, $\exogenous_5$ is modeled implicitly by a logistic-KAN. Closed-form expressions can be extracted from every approximated function.}
\label{fig:kan-scm}
\end{figure}

This work addresses the gap between expressive causal generative modeling and functional interpretability in tabular settings. We propose
\emph{\ourslong} (\ours),
a causal generative model in which each structural mechanism is parameterized by a Kolmogorov-Arnold network (KAN). KANs replace scalar weights with learnable univariate functions, inducing structured representations that facilitate inspection, pruning, and symbolic approximation. By embedding KANs within an additive noise SCM framework, our approach preserves generative capabilities and query-agnostic inference, while supporting the extraction of explicit structural equations.

A compact overview of the proposed model is shown in \cref{fig:kan-scm}. Each endogenous variable is modeled as an additive mechanism with exogenous noise, where the structural function is parameterized by a KAN. Mixed data types, including categorical variables, are supported via appropriate likelihood parameterizations. Crucially, closed-form expressions can be extracted from each learned mechanism, enabling direct auditing and visualization of causal effects.

Our contributions are threefold. First, we introduce a KAN-based causal generative model for mixed-type tabular data that preserves SCM semantics and supports observational, interventional, and counterfactual reasoning in a query-agnostic manner. Second, we provide a validation pipeline based on observational distribution matching and independence testing of inferred exogenous noise, enabling practitioners to assess model misspecification from observational data alone. Third, we develop an automated interpretability framework that prunes structural mechanisms, performs symbolic approximation, and extracts closed-form expressions and visualization tools for effect auditing. Code and an interactive interface for visualizing causal effects are publicly available at \href{\codelink}{\codeshort} and \href{\interfacelink}{\interfaceshort}, respectively.

The remainder of this paper is organized as follows. In \cref{sec:related_work}, we review prior work on tabular generative modeling and deep causal generative models. In \cref{sec:background}, we recall the formal framework of structural causal models and causal queries. In \cref{sec:method}, we introduce \ours, including its additive formulation, mixed-data extension, and interpretability pipeline. In \cref{sec:experiments}, we evaluate the proposed approach on synthetic, semi-synthetic, and real-world datasets, including misspecification analyses and effect visualization. Finally, \cref{sec:conclusion} concludes with a discussion of limitations and future research directions.

\section{Related work}
\label{sec:related_work}

This work lies at the intersection of generative modeling, causal inference, and interpretable neural architectures. Deep generative models such as generative adversarial networks (GANs) and variational autoencoders (VAEs) have achieved remarkable success in modeling complex data distributions~\citep{goodfellow2020generative, kingma2013auto}. However, these models typically target the observational joint distribution and lack causal semantics, which limits their ability to answer interventional or counterfactual queries from observational data alone~\citep{pearl2009causality}.

Causal generative models (CGMs) address this limitation by combining deep generative modeling with the formalism of SCMs, enabling the approximation of data-generating mechanisms while preserving the interventional structure of the SCM~\citep{pearl2009causality}. Classical identifiability results for nonlinear additive noise models (ANMs)~\citep{hoyer2009anm} motivate many recent deep-learning extensions that employ flexible neural architectures to parameterize structural mechanisms and infer exogenous noise variables. Recent approaches span several generative families, including normalizing-flow-based models such as \methodname{causal flows}~\citep{javaloy2023causal}, \methodname{CAREFL}~\citep{khemakhem2021caf}, and \methodname{DCSM}~\citep{pawlowski2020dscm}, as well as related developments~\citep{parafita2022estimand, nasr2023counterfactual, balgi2025deep, almodovar2025decaflow}; GAN-based causal generators such as \methodname{NCM}~\citep{xia2023neural} and other frameworks~\citep{van2021decaf, xu2019achieving, Goudet2018}; diffusion-based approaches such as \methodname{DBCM}~\citep{chao2024dcm}; and variational models such as \methodname{VACA}~\citep{sanchez2022vaca}, among other neural causal modeling approaches~\citep{xia2021causal, sick2025interpretable, stegle2010probabilistic}. Some of these works further establish theoretical guarantees or identifiability results under additional assumptions, including settings with hidden confounders~\citep{parafita2022estimand, nasr2023counterfactual, almodovar2025decaflow, xia2023neural}. While these models provide expressive approximations of SCMs and support query-agnostic inference, the structural mechanisms are typically parameterized by deep neural networks whose functional form is difficult to inspect or audit.

On the other hand, interpretability is a key requirement for deployment in sensitive domains, where models must support auditing, debugging, and transparent decision-making~\citep{doshivelez2017rigorous}. In tabular prediction tasks, interpretable model families such as generalized additive models and neural additive models have been widely studied~\citep{caruana2015intelligible, agarwal2021nam}. In causal inference, interpretability is further tied to the credibility of estimated effects and the ability to justify “what-if” decisions in terms of mechanisms rather than correlations~\citep{pearl2009causality}. Some approaches explore interpretable causal regression through rule-based models~\citep{bargaglistoffi2024causalruleensembleinterpretable} or interpretable surrogate models~\citep{kim2021learninginterpretablemodelscausal}. Nevertheless, although the causal graph provides structural insight, the functional form of the mechanisms learned by causal generative models remains largely opaque.

Recently, Kolmogorov–Arnold networks (KANs) have been proposed as an alternative neural architecture in which scalar weights are replaced by learnable univariate functions defined on edges~\citep{liu2025kan, liu2024kan2}. This representation induces structured functional decompositions that facilitate inspection, pruning, and symbolic approximation. KANs have rapidly been adapted to multiple architectures, such as convolutional networks~\citep{bodner2025convkan} or graph neural networks~\citep{li2025kolmogorov},
and have demonstrated promising interpretability capabilities in applications such as healthcare~\citep{almodovar25kaam, pendyala25effectiveness}, finance~\citep{CHO2025128781}, and biology~\citep{alharbi25biomarker}. Their use in causal inference has only recently begun to be explored, for instance, in treatment effect estimation~\citep{mehendale2025kanite, almodovar2025causalkans}.

Building on these developments, this work investigates the use of KANs as structural mechanisms within causal generative models, aiming to combine the flexibility of deep causal generators with interpretable representations of the learned causal mechanisms.

\section{Background}
\label{sec:background}

\subsection{Structural Causal Models}
\label{subsec:scm}

Let $\covariates = \{\covariatei_1, \covariatei_2, ..., \covariatei_{\covsize}\} \in \covspace$ be a set of random variables with joint distribution $\distribution[\covariates]$.

\begin{definition}
A causal graph $\graph = (\indexset, \edges)$ consists of a set of indices $\indexset = \{1,2,...,\covsize\}$ and directed edges $\edges$, where $(\indexcovariate,\indexthree)\in\edges$ denotes that $\covariatei_\indexcovariate$ causally influences $\covariatei_\indexthree$.
\end{definition}

Let $\parents{\indexcovariate}$ denote the set of parents of node $\indexcovariate$. If the graph is Markov and faithful~\citep{peters2011causal}, the joint distribution factorizes according to the parental Markov condition~\citep{pearl2009causality}:

\begin{equation}
\pdf(\covariates)=\prod_{\indexcovariate=1}^{\covsize}
\pdf(\giventhat{\covariatei_\indexcovariate}{\parents{\indexcovariate}}).
\end{equation}

\begin{definition}
A Structural Causal Model (SCM) $\scm=(\funcb,\distribution[\exogenous])$ defines a data-generating process (DGP) where each observed variable is generated from its parents and an exogenous noise variable $\exogenous=\{\exogenousi_1,...,\exogenousi_\covsize\}$:
\begin{equation}
\covariatei_\indexcovariate =
\func[\indexcovariate](\parents{\indexcovariate},\exogenousi_\indexcovariate),
\quad \exogenous \sim \distribution[\exogenous].
\end{equation}
\end{definition}

An SCM characterizes the full causal data-generating process and therefore determines the value of any causal query~\citep{peters2017elements}. Interventions are formalized through the $\doop$-operator.

\begin{definition}[\doop-operator~\citep{pearl2009causality}]
Let $\intervention\subseteq\indexset$. The intervention $\doop(\covariatei_{\intervention}=a_{\intervention})$ replaces the structural assignments of the intervened variables by constants, while leaving the remaining mechanisms unchanged.
\end{definition}

Causal queries are functionals of the SCM involving at least one intervention, including interventional distributions, causal effects, and counterfactual quantities~\citep{pearl2009causality,peters2017elements}. SCMs therefore support reasoning at the three levels of Pearl's hierarchy: observational, interventional, and counterfactual. Interventional distributions are obtained by sampling $\exogenous\sim\distribution[\exogenous]$ and propagating the modified structural assignments, while counterfactuals are computed via the abduction--action--prediction procedure~\citep{pearl2009causality,peters2017elements}.

\subsection{Causal generative models}
\label{subsec_gcm}

Causal generative models (CGMs) are generative machine learning models that respect a causal graph $\graph$ and approximate the mechanisms of an SCM $\scm$. A learned model $\gcm=(\pred{\funcb},\pred{\distribution[\exogenous]})$ can then be used as a simulator of the underlying causal system.

Once the structural mechanisms are approximated, the model supports observational, interventional, and counterfactual reasoning through the usual $\doop$-operator semantics. For continuous variables, under causal sufficiency and with sufficiently expressive function classes, CGMs can recover the structural equations up to standard identifiability considerations~\citep{javaloy2023causal}, leading to consistent answers to causal queries~\citep{pearl1999probabilities,pearl2009causality}.

CGMs are typically \emph{query-agnostic}: once trained, they allow answering arbitrary causal queries without retraining~\citep{parafita2022estimand}. Recent work has proposed flexible neural CGMs with universal approximation capabilities for SCMs, including models based on normalizing flows, diffusion models, and neural structural mechanisms (see \cref{sec:related_work}). However, most existing approaches rely on highly expressive neural networks whose mechanisms are difficult to interpret.

Although neural networks are universal function approximators, certain classes of functions are known to be more efficiently approximated by spline-based representations, particularly smooth functions in low-data regimes~\citep{friedman90adaptive}. This motivates exploring alternative architectures that preserve flexibility while improving interpretability, such as Kolmogorov–Arnold networks.

\subsection{Kolmogorov-Arnold Networks}
\label{subsec:kan}

Kolmogorov–Arnold networks (KANs) provide an alternative representation to deep neural networks. While standard neural networks rely on the universal approximation theorem~\citep{hornik1989multilayer}, KANs are inspired by the Kolmogorov–Arnold representation theorem~\citep{kolmogorov1957representations,Arnold1957}, which states that any continuous function $\func:\mathbb{R}^{\covsize}\rightarrow\mathbb{R}$ can be expressed as

\begin{equation}
\func(\samplecov)=
\sum_{\indexthree=1}^{2\covsize+1}
\kanoutlayer_\indexthree
\Big(
\sum_{\indexfour=1}^{\covsize}
\kanlayer_{\indexthree,\indexfour}(\samplecovi_\indexfour)
\Big),
\end{equation}

where $\kanlayer$ and $\kanoutlayer$ are univariate nonlinear functions.

Modern implementations approximate these functions using stacked layers, similarly to neural networks~\citep{liu2025kan}. A KAN with $\nLayers$ layers can be written as

\begin{equation}
\text{KAN}(\samplecov)=
(\kan_{\nLayers-1}\circ\cdots\circ\kan_0)(\samplecov),
\end{equation}

where each layer applies sums of learnable univariate functions.
In practice, the univariate functions are parameterized with B-splines,

\begin{equation}
\kanlayer(\samplekan)=
\splinebiascoef\,b(\samplekan)+
\splinecoef\,\text{spline}(\samplekan),
\end{equation}

with $b(\cdot)$ a fixed baseline and $\splinecoef,\splinebiascoef$ learnable parameters~\citep{de1978practical}. This formulation allows training with standard optimization methods while maintaining a structured functional representation.

Because KANs explicitly model univariate transformations, they enable direct inspection of nonlinearities and facilitate symbolic approximation and pruning. Additional regularization and sparsification techniques can further simplify the learned functions~\citep{liu2024kan2}. In practice, shallow KAN architectures often achieve a favorable balance between expressiveness and interpretability, which has motivated their use in several scientific and applied domains~\citep{almodovar25kaam,Knottenbelt_2025,CHO2025128781}.
\section{\ourslong}
\label{sec:method}

Assume that we have a dataset, $\dataset = \{\samplecovi_1^{(\indexsample)}, \samplecovi_2^{(\indexsample)}, ..., \samplecovi_\covsize^{(\indexsample)}\}_{\indexsample=1}^\samplesize$, containing \samplesize i.i.d. samples from the unknown observational distribution $\distribution[\covariates]$, generated by an unknown SCM \scm, with a known causal graph \graph. Given that, we want to obtain a causal generative model \gcm that approximates \scm and, in addition, that provides as many insights about each function $\func[\indexcovariate]$ as possible: closed-form equations, interaction terms, symbolic approximations, feature importance, visualization plots of separable effects, etc.

 In the following, we assume \emph{causal sufficiency}, \ie, that there are no hidden confounders, and therefore, 
 all the exogenous variables are independent: $\pdf(\exogenous) = \prod \pdf(\exogenousi_\indexcovariate)$.

Our approach, \ours, consists of several components. First, we define the KAN-based ANM, which is suitable for continuous variables. Second, we extend it to support discrete or categorical variables. Third, we describe techniques to achieve closed-form equations and plotting tools that help to achieve interpretability.

\subsection{\Ours as ANM}
\label{sec:kan_anm}
\ours is an additive noise model (ANM) in which each mechanism of the true SCM is modeled following \cref{eq:anm_equations}, where each function that models the effect of the parents on their children, $\pred{\funcg[\indexcovariate]} (\parents{\indexcovariate})$, is approximated by a KAN.

\begin{equation}
    \pred{\func[\indexcovariate]}(\parents{\indexcovariate}, \exogenousi_\indexcovariate) = \pred{\funcg[\indexcovariate]} (\parents{\indexcovariate}) + \exogenousi_\indexcovariate
    \label{eq:anm_equations}
\end{equation}

The choice of ANM has three main motivations: \itemi in the causal inference literature, there are many relevant papers that assume additive noise models to capture real DGPs
\citep{hoyer2009anm, schultheiss24assesing, shimizu2006linear, buhlmann2014cam, mooij2009regression}
\itemii it enables a very interpretable way to understand the exogenous variables as independent additive noise coming from measurement error, a modeling used in fields such as communications~\citep{proakis1994communication}, robotics~\citep{barfoot2024state} or econometrics~\citep{fuller2009measurement},
and
\itemiii it is reasonably easy to test if our ANM is a good approximator of the true data-generating process.

Regarding the third point, and although proving that the true DGP is additive is not testable, in general, we can test how well specified our ANM is with respect to the observational data~\citep{schultheiss24assesing}. When facing a real-world problem in which the ground-truth of causal effect cannot be calculated, we recommend testing model misspecification, and only use our model if those tests are satisfactory.

\begin{itemize}
    \item Test the independence of the exogenous noise: $\exogenousi_\indexcovariate \indep \parents{\indexcovariate}$. Note that we assume that the causal graph is well specified, therefore if the independence condition is not met, it is because of a model misspecification (heteroscedastic noise
    or nonadditivity), not because the causal direction is reversed~\citep{hoyer2009anm}.
    Following~\citet{hoyer2009anm}, we propose to use the statistic of the Hilbert-Schmidt Independence test (HSIC)~\citep{gretton2005measuring}. In the same fashion, we recommend testing that \exogenous are jointly independent using the multidimensional test dHSIC~\citep{pfister2018kernel}.

    \item Test the observational match. If a more flexible (\eg \methodname{causal flows}~\citep{javaloy2023causal}) model achieves better observational fitting, we consider model misspecification that could mean the DGP is nonadditive. We propose measuring the match of the observational distribution with both \emph{maximum mean discrepancy} (MMD) and classifier two-sample test (C2ST) using the accuracy of a random forest, as explained in \cref{sec:experiments}.
\end{itemize}

Note that meeting observational matching and independence tests do not allow us to confirm that the true SCM is captured by an ANM, but support well-specification in the sense of ``not rejecting misspecification''.

Operation details about training, hyperparameter search, implementation of the do-operator and model misspecification tests can be found in \cref{app:sec:operation}.

\subsection{Mixed type \oursmix}

In tabular data, there are usually discrete or categorical variables. Therefore, we allow our model
to use discrete or categorical models as predictors; in this case, Logistic-KANs. Following the terminology of~\citet{almodovar25kaam}, a Kolmogorov-Arnold additive model (KAAM) is a KAN with no hidden layers, and a Logistic-KAN is a KAN that handles discrete variables by applying a sigmoid function for binary variables or a softmax function for categorical variables. We allow each network within \oursmix to be a KAAM, a Logistic-KAN, a Logistic-KAAM, or a standard KAN.

We show in \cref{prop:interventional_mixed} how modeling the distribution $\pdf_\parameters(\giventhat{\covariatei_\indexcovariate}{\parents{\indexcovariate}})$ of a categorical variable with a flexible enough predictor is equivalent to recovering the function $\func[\indexcovariate](\parents{\indexcovariate}, \exogenousi_\indexcovariate)$ with the exogenous term implicit. Although the exogenous term cannot be identified, all interventional queries in the SCM can.

\begin{proposition}[mixed CGM equivalence with SCMs] Consider the true SCM, \scm, in which one variable, $\covariatei_\indexcovariate$ is discrete, with observational probability mass function $\pdf(\giventhat{\covariatei_\indexcovariate}{\parents{\indexcovariate}})$. Let $\pdf_{\parameters}(\giventhat{\covariatei_\indexcovariate}{\parents{\indexcovariate}})$ be a conditional categorical model (\eg Logistic KAN) such that $\pdf_{\parameters}(\giventhat{\cdot}{\parents{\indexcovariate}}) \rightarrow \pdf(\giventhat{\cdot}{\parents{\indexcovariate}})$. Being $\cdf_\parameters(\giventhat{\cdot}{\parents{\indexcovariate}})$ the cumulative density function (CDF) of $\pdf_\parameters$ and defining
\begin{equation}
    \pred{\func[\parameters]}(\parents{\indexcovariate}, \sampleexogenousi_\indexcovariate) = \min\{k: \sampleexogenousi \leq \cdf_\parameters (\giventhat{k}{\parents{\indexcovariate}})\},
\end{equation}
    with $\exogenousi_\indexcovariate \sim \uniform(0,1)$. Then \itemi $\covariatei_\indexcovariate = \pred{\func[\parameters]}(\parents{\indexcovariate}, \exogenousi_\indexcovariate)$ defines a SCM mechanism whose induced conditional law equals $\pdf_\parameters(\giventhat{\cdot}{\parents{\indexcovariate}})$: sampling from $\pdf_\parameters(\giventhat{\covariatei_\indexcovariate}{\parents{\indexcovariate}})$ is equivalent to sampling from $\exogenousi_\indexcovariate$ and applying $\pred{\func[\parameters]}$; and \itemii the interventional distribution of $\covariatei_\indexcovariate$ and its descendants equals the one of \scm.
    \label{prop:interventional_mixed}
\end{proposition}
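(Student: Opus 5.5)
Part (i) is the classical inverse-CDF (quantile) sampling argument applied conditionally on the parents. Assume the support of $\covariatei_\indexcovariate$ is ordered as $\{1,2,\dots\}$ and fix a parent value $\parents{\indexcovariate}$. The map $\sampleexogenousi \mapsto \pred{\func[\parameters]}(\parents{\indexcovariate},\sampleexogenousi)$ is a nondecreasing step function. By definition of the minimum, $\pred{\func[\parameters]}(\parents{\indexcovariate},\sampleexogenousi)=k$ holds exactly when
\begin{equation*}
\cdf_\parameters(\giventhat{k-1}{\parents{\indexcovariate}}) < \sampleexogenousi \leq \cdf_\parameters(\giventhat{k}{\parents{\indexcovariate}}),
\end{equation*}
with the convention $\cdf_\parameters(\giventhat{0}{\cdot})=0$. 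Because $\exogenousi_\indexcovariate\sim\uniform(0,1)$ is independent of $\parents{\indexcovariate}$ under causal sufficiency, this interval has probability $\cdf_\parameters(\giventhat{k}{\parents{\indexcovariate}})-\cdf_\parameters(\giventhat{k-1}{\parents{\indexcovariate}})=\pdf_\parameters(\giventhat{k}{\parents{\indexcovariate}})$. The mechanism is a measurable function of $(\parents{\indexcovariate},\exogenousi_\indexcovariate)$, so it is a valid SCM assignment, and its induced conditional law is exactly $\pdf_\parameters$. This proves (i). The measure-zero boundary cases and the fact that the minimum is attained (since $\cdf_\parameters\to 1$) are routine to check.

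For (ii), we use that in a causally sufficient SCM with a fixed acyclic graph, every interventional distribution $\doop(\covariatei_\intervention=a_\intervention)$ is given by the truncated factorization
\begin{equation*}
\prod_{\indexthree\notin\intervention}\pdf(\giventhat{\covariatei_\indexthree}{\parents{\indexthree}})\Big|_{\covariatei_\intervention=a_\intervention}.
\end{equation*}
This formula depends on each mechanism only through the conditional law it induces. Consider two SCMs on the same graph with independent exogenous noises: the true $\scm$, and the model obtained by replacing the $\indexcovariate$-th mechanism with $\pred{\func[\parameters]}$ while keeping all other mechanisms (or their consistent approximations) unchanged. By part (i) and the assumption $\pdf_\parameters\to\pdf$, these two SCMs have the same, or in the limit converging, kernels $\pdf(\giventhat{\covariatei_\indexcovariate}{\parents{\indexcovariate}})$. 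The intervention removes the factors of the intervened nodes and leaves the factor of node $\indexcovariate$ intact, unless $\indexcovariate\in\intervention$, in which case the factor is removed from both SCMs. Therefore the truncated products coincide. Marginalizing onto $\covariatei_\indexcovariate$ and its descendants yields equality of their interventional distributions. This can also be argued by ancestral sampling in topological order, where each step draws from identical kernels.

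The main obstacle is to state precisely what "equals" means under the limit $\pdf_\parameters\to\pdf$ and to justify passing the limit through the product and the marginalization. I would first treat the exact case $\pdf_\parameters=\pdf$ and then argue continuity. Fix the intervention and integrate the product of kernels. Total-variation convergence of the $\indexcovariate$-th kernel, uniformly or in $L^1$ over the parent distribution, bounds the total-variation distance of the joint interventional law by the integrated kernel error, via a telescoping or coupling argument. I would also remark explicitly that this argument covers only interventional, not counterfactual, queries. The reason is that $\exogenousi_\indexcovariate$ is not identified: different monotone reparametrizations of the noise give the same kernels but different abductions, which matches the claim before the proposition that only interventional queries are recovered.
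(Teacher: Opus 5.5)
Your proposal is correct and follows essentially the same route as the paper: the interventional law depends on the $\indexcovariate$-th mechanism only through the kernel it induces, which the intervention leaves unchanged because $\exogenousi_\indexcovariate \indep \parents{\indexcovariate}$, and the descendants of $\covariatei_\indexcovariate$ are then a mixture over $\covariatei_\indexcovariate$ whose weights converge. Your version is more complete than the paper's, which leaves part (i) implicit, only treats a single intervened parent $\covariatei_\indexfour$, and simply asserts that convergence of the weights carries over to the descendants; your truncated-factorization and total-variation telescoping argument covers arbitrary intervention sets, provided the kernel convergence is assumed at the parent values the intervention actually reaches (that is, uniformly or under the interventional law of $\parents{\indexcovariate}$, not only under the observational one).
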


 On the other hand, identifiability of point-valued counterfactuals is not possible with discrete outcomes, and only bounds can be provided~\citep{balke1997bounds, balke1994probabilistic, tian2000probabilities}. Fortunately, as stated in \cref{prop:counterfactual_mixed}, in a mixed SCM, there exist counterfactual values that can be recovered, depending on the intervention of interest.

\begin{proposition}[Counterfactual identifiability in mixed GCM] Let the partition $\indexset = \gC \cup \gZ$ be the set of indices of the continuous and discrete variables, respectively. Given the intervention on \intervention, and let \descendants{\intervention} be the descendants of \intervention in \graph. Consider a well-specified causal generative model, \gcm, in which the map $\exogenousi_\indexcovariate \mapsto \pred{\func[\indexcovariate]}(\parents{\indexcovariate}, \exogenousi_\indexcovariate)$ for every $\indexcovariate\in \gC$ is injective, then \gcm yields point-valued counterfactual predictions if there are not discrete variables in the descendants of \intervention, \ie $\descendants{\intervention} \in \gC$.
\label{prop:counterfactual_mixed}
\end{proposition}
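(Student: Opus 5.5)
The argument follows the abduction--action--prediction procedure, checking at each step that the computed quantity is a single point rather than a set or a distribution. We fix a factual observation $\samplecov$ and an intervention $\doop(\covariatei_{\intervention}=a_{\intervention})$. Every variable is then of one of three types: a descendant of $\intervention$, an intervened variable, or a non-descendant. A counterfactual value is point-valued exactly when the prediction step returns a single value for every variable. The proof therefore shows that each group receives a deterministic value, using only the exogenous terms that can be recovered uniquely.

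\textbf{Non-descendants and intervened nodes.} For $\indexcovariate\notin\descendants{\intervention}\cup\intervention$, the modified SCM leaves the mechanisms of $\covariatei_\indexcovariate$ and of all its ancestors unchanged. No ancestor of such a node lies in $\intervention$, since otherwise the node would be a descendant. By induction along a topological order, the counterfactual value of such a node equals its factual value $\samplecovi_\indexcovariate$. This holds whether the node is discrete or continuous, and it requires no abduction, because any exogenous value consistent with the evidence reproduces the same output. Intervened nodes are fixed to $a_{\intervention}$ by definition of the $\doop$-operator.

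\textbf{Abduction for descendants.} For each $\indexcovariate\in\descendants{\intervention}$ we have $\indexcovariate\in\gC$ by hypothesis, so the map $\exogenousi_\indexcovariate\mapsto\pred{\func[\indexcovariate]}(\parents{\indexcovariate},\exogenousi_\indexcovariate)$ is injective for the factual parent values. The factual pair $(\samplecovi_\indexcovariate,\parents{\indexcovariate})$ therefore determines a unique $\sampleexogenousi_\indexcovariate$. In the ANM parameterization of \cref{eq:anm_equations} this value is explicit: $\sampleexogenousi_\indexcovariate=\samplecovi_\indexcovariate-\pred{\funcg[\indexcovariate]}(\parents{\indexcovariate})$. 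The posterior over $\exogenousi_\indexcovariate$ given the evidence is thus a Dirac mass. This step must be carried out on the factual world, where all parents are observed, so injectivity applies directly.

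\textbf{Prediction for descendants.} We proceed in topological order through $\descendants{\intervention}$. Each descendant's counterfactual parents are intervened nodes, non-descendants, or earlier descendants, and all of these are already point-valued by the previous steps or by the induction hypothesis. Applying the deterministic $\pred{\func[\indexcovariate]}$ to these parents and the unique $\sampleexogenousi_\indexcovariate$ gives a single value. The discrete nodes are the only ones whose exogenous term is not identified; this is the implicit $\uniform(0,1)$ term of \cref{prop:interventional_mixed}, whose posterior given the evidence is an interval. These nodes enter the computation only as non-descendants, where their counterfactual value is their factual value, so the non-identified $\exogenousi_\indexcovariate$ never affects the output.

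\textbf{Expected obstacle.} The main difficulty is formalizing the claim that non-identified exogenous posteriors are irrelevant. The intended justification is that the counterfactual is a functional of the joint posterior of $\exogenous$ that is constant in the coordinates indexed by $\gZ$. We would state this as a lemma: the counterfactual map depends on $\exogenous$ only through $\{\exogenousi_\indexcovariate\}_{\indexcovariate\in\descendants{\intervention}}$. The lemma follows from the recursive structure of the modified SCM.

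A second, smaller point is the standing assumption of a well-specified model. Under it the deterministic value computed by $\gcm$ coincides with the true counterfactual. The claim to be proved is only that $\gcm$ returns a point-valued prediction, and this does not depend on that coincidence.
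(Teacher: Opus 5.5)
Your proposal is correct and follows essentially the same route as the paper's proof: abduction--action--prediction in topological order, where each descendant's parents are intervened, earlier descendants, or non-descendants that keep their factual values, and injectivity gives a unique abducted exogenous value for every continuous descendant. Your version goes slightly further by justifying something the paper only asserts. It shows that non-descendants, including discrete ones, keep their factual values for any exogenous value consistent with the evidence, so the non-identified discrete noise never enters the counterfactual.
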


Proofs of \cref{prop:interventional_mixed} and  \cref{prop:counterfactual_mixed} can be found in \cref{app:sec:proofs}. Note that both propositions are valid for any CGM that models the distribution of categorical variables.

This adaptation of \ours to \oursmix allows us to increase flexibility when modeling tabular data beyond continuous features, while retaining the identifiability of some causal queries of interest. A sketch of \oursmix can be observed in \cref{fig:kan-scm}, where each function of the true SCM is modeled by a KAN, including the discrete variables, in which the exogenous noise is implicit.

 \begin{figure*}[ht]
    \centering
    \includegraphics[width=0.9\linewidth]{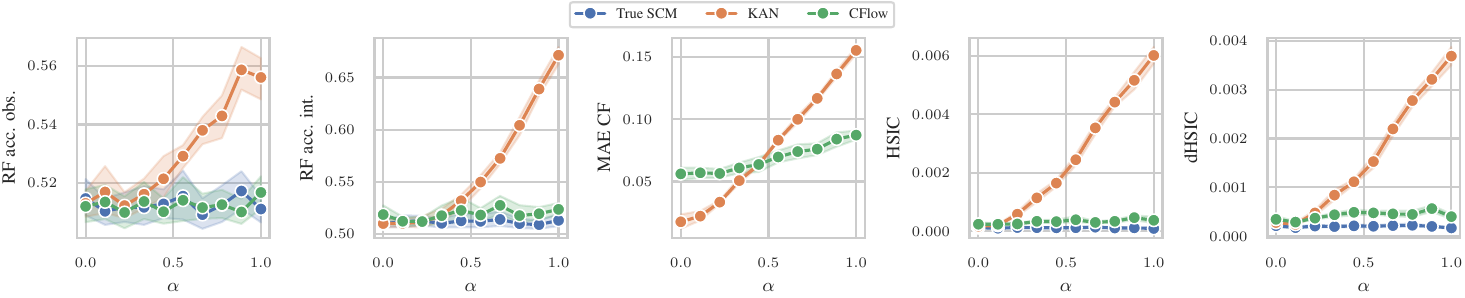}
    \caption{Sensitivity analysis on additivity of exogenous variables. Mean and 95$\%$ confidence intervals over 10 realizations of the DGP. The true SCM represents the metrics obtained by data generated by the true SCM. CFlow, as a more flexible model, achieves good approximations and independence even with non-additive noise, while \ours deviates from this.}
    \label{fig:sensitivity}
\end{figure*}

\subsection{Towards interpretability}

Here, we define some steps that highlight the interpretable capabilities of the KANs, including inductive biases towards simplicity that not only regularize the output to get better approximations of smoother functions, but also improve interpretability:

\begin{enumerate}
    \item Between the best KAN models (fail to reject that they are worse than the best), selecting the one with fewer hidden layers (ideally, a KAAM).
    \item Extract each predicted function $\pred{\func[\indexcovariate]}$.
    \item Prune the edges of the KAN that show little importance, following the process described by~\citet{liu2024kan2}, yielding~$\pred{\func[\indexcovariate]}^{(p)}$. This only removes the non-relevant variables
    \item Fit a symbolic substitution of $\pred{\func[\indexcovariate]}^{(s)}$ following the process of~\citet{almodovar2025causalkans}. This approximates the function to a combination of atoms: polynomials, trigonometric functions, exponentials, etc.
\end{enumerate}

When performing each of these steps, the well-specification tests described in \cref{sec:kan_anm} should be carried out. If the results are not satisfactory, the simplification step \textbf{is not recommended}. One key observation is that, in each of the steps, the distribution of the residuals should be fitted on observational data again; otherwise, the distribution matching will worsen.

Once we have a satisfactory, simplified $\pred{\func[\indexcovariate]}$, we can either \itemi analyze directly the closed-form expression, which is a composition of univariate functions that could be audited if the number of layers of the selected KAN is low; and \itemii plot the contribution of each of the parents to the target variable, extracting feature importance, and using the plotting tools developed by~\citet{almodovar25kaam}: \textit{probability radar plots} (PRP) and \textit{partial dependence plots} (PDP).



Summarizing, the process in a real environment with observational data consists of \itemi constructing and fit \ours in observational data, \itemii testing the observational fit and the independence between exogenous inferred variables, comparing with other more flexible models, \itemiii applying the simplification steps of pruning and symbolic substitution, including tests of step ii), and accept the simplification only if the tests results are not worse up to some threshold established by the user, and \itemiv extracting the closed form expressions of the generating functions and provide interpretable plots.

\begin{table*}[ht]
\scriptsize
\setlength{\tabcolsep}{3.2pt}
\centering
\begin{tabular}{lcccccc|ccccc}
\hline
& \multicolumn{6}{c}{Continuous variables} & \multicolumn{5}{c}{Mixed type variables} \\
\cmidrule(lr){2-7} \cmidrule(lr){8-12}
Model 
& $\text{RF}_\text{acc}^\text{obs}$ 
& $\text{MMD}_\text{obs}$ 
& $\text{RF}_\text{acc}^\text{int}$ 
& $\text{MMD}_\text{int}$ 
& $\text{MAE}_\text{CF}$ 
& p-value 
& $\text{RF}_\text{acc}^\text{obs}$ 
& $\text{MMD}_\text{obs}$ 
& $\text{RF}_\text{acc}^\text{int}$ 
& $\text{MMD}_\text{int}$ 
& p-value \\

& & $\times 10^{-3}$ & & $\times 10^{-3}$ & $\times 10^{-2}$ &
& & $\times 10^{-3}$ & & $\times 10^{-3}$ & \\
\hline

KAN 
& $\mathbf{0.51_{0.0}}$ & $16.49_{8.1}$ & $\mathbf{0.53_{0.0}}$ & $4.79_{4.1}$ & $5.10_{2.4}$ & $0.061^*$
& $\mathbf{0.52_{0.02}}$ & $1.64_{0.74}$ & $0.54_{0.02}$ & $9.11_{7.24}$ & $0.042$ \\

KAAM 
& $0.52_{0.0}$ & $16.76_{9.6}$ & $\mathbf{0.53_{0.0}}$ & $4.76_{5.1}$ & $4.66_{2.3}$ & $0.055^*$
& $\mathbf{0.52_{0.01}}$ & $1.89_{0.93}$ & $0.54_{0.02}$ & $8.40_{5.63}$ & $0.024$ \\

ANM 
& $0.53_{0.0}$ & $\mathbf{8.86_{2.6}}$ & $0.54_{0.0}$ & $\mathbf{3.02_{2.5}}$ & $\mathbf{2.62_{2.5}}$ & $\underline{0.500}^*$
& $0.54_{0.02}$ & $\mathbf{1.22_{0.44}}$ & $0.56_{0.03}$ & $\mathbf{6.22_{5.16}}$ & $0.042$ \\

DBCM 
& $0.56_{0.0}$ & $31.50_{23.0}$ & $0.55_{0.0}$ & $6.79_{5.8}$ & $3.98_{1.5}$ & $<1\text{e-}3$
& $0.58_{0.04}$ & $3.43_{1.91}$ & $0.57_{0.03}$ & $12.53_{9.90}$ & $<1\text{e-}3$ \\

CFlow 
& $0.52_{0.0}$ & $20.46_{6.9}$ & $0.61_{0.1}$ & $9.61_{8.7}$ & $3.86_{1.7}$ & $<1\text{e-}3$
& $0.53_{0.03}$ & $3.16_{1.68}$ & $0.62_{0.09}$ & $7.18_{4.99}$ & $<1\text{e-}3$ \\

$\text{KAN}_\text{mix}$ 
& $-$ & $-$ & $-$ & $-$ & $-$ & $-$
& $\mathbf{0.52_{0.01}}$ & $1.79_{0.92}$ & $\mathbf{0.52_{0.01}}$ & $8.24_{9.88}$ & $\underline{0.690}^*$ \\

$\text{KAAM}_\text{mix}$ 
& $-$ & $-$ & $-$ & $-$ & $-$ & $-$
& $\mathbf{0.52_{0.02}}$ & $1.60_{0.78}$ & $\mathbf{0.52_{0.01}}$ & $6.29_{4.33}$ & $0.690^*$ \\

\hline
\end{tabular}
\caption{
Unified results for continuous and discrete experiments. Metrics report averages over the 11 synthetic datasets ($\text{mean}_\text{std}$) with $1000$ training samples. Lower is better for all metrics. Continuous experiments additionally report counterfactual error ($\text{MAE}_\text{CF}$), which is not defined for the discrete setting. The last column of each block reports p-values from post-hoc Wilcoxon tests aggregated over all metrics. The Friedman test p-value is $<1\text{e-}3$ for both experiments, indicating significant differences among methods overall. Underline denotes the best method. $^*$ indicates no statistically significant difference ($p>0.05$).}
\label{tab:results_unified}
\end{table*}

\begin{figure*}[ht]
    \centering
    \begin{subfigure}{0.65\linewidth}
            \includegraphics[width=\linewidth, trim={0 4cm 0 0}, clip]{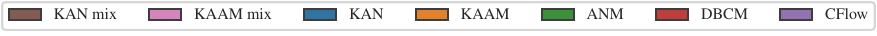}
    \end{subfigure}
    \begin{subfigure}{0.48\linewidth}
    \centering
    \includegraphics[width=.95\linewidth, trim={0 0 0 0.6cm}, clip]{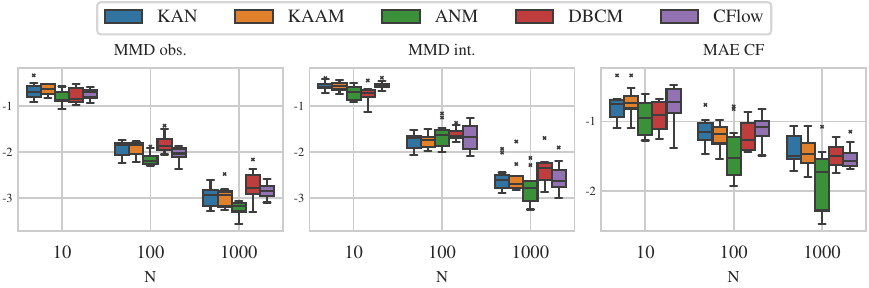}
    \caption{Continuous experiment.}
    \label{fig:rf_obs_continuous_samples}
    \end{subfigure}
    \begin{subfigure}{0.48\linewidth}
            \includegraphics[width=\linewidth, trim={0 0 0 0.65cm}, clip]{figs/discrete_results.pdf}
    \caption{Mixed type experiment.}
    \label{fig:discrete_metrics}
    \end{subfigure}
    \caption{Boxplot of $\log_{10}$ of observational and interventional MMD of each model across 10 realizations with \captiona continuous data and \captionb mixed type data, as a function of the number of samples with mixed type data.}
    \label{fig:synthetic_results}
\end{figure*}

\section{Experiments}
 \label{sec:experiments}

 We have carried out four experiments to support the use of \ours, based on the previous process. This section is organized as follows. First, we present the metrics that we use to measure performance and contrast our hypothesis. Second, we perform a sensitivity analysis where we evaluate how our model suffers from misspecification when the true DGP is not additive, therefore guiding the practitioner through the process in real data. Third, we evaluate the performance of \ours in a set of synthetic datasets, of both continuous and discrete data, compared with state-of-the-art CGM, to show that under well specification, the performance of \ours is competitive for all data regimes, depending on the amount of data available. Fourth, we present results over a semi-synthetic dataset in which we measure the performance on more complex data. Finally, we implement the previous process in a real-world medical dataset, in which we carry out the misspecification tests and extract closed expressions of the estimated functions, also providing interpretable plots that help to understand the causal effects in the whole complex system.

\subsection{Validation metrics}

For evaluating the fitting of the observational distribution and the interventional distributions,  we use both the maximum mean discrepancy (MMD)~\citep{gretton2012kernel} with Gaussian kernel and a classifier two samples test (C2ST)~\citep{friedman04multivariate, lopez-paz2017revisiting}, where the test accuracy of a random forest (RF) trained with labeled data (real data labeled with 1 and synthetic data labeled with 0) is considered as our test statistic. Less MMD means more similarity between distributions, and an RF accuracy close to chance level ($0.5$) indicates that the synthetic samples resemble the observational distribution.
Second, we use
mean absolute error ($\text{MAE}_\text{CF}$)
to measure the counterfactual
performance.

\subsection{Sensitivity analysis}

We have designed an experiment to test how sensible it is to adopt an additive noise model in a real case. Although in a real dataset we do not have access to the counterfactual error and MMD interventional, we can always measure the fit with the observational distribution and the independence between the extracted exogenous variables. That serves to falsify the additivity of the true DGP.

We generated 6000 samples of the following DGP and used the half for validation. 
\[
\begin{tikzpicture}[>=stealth, baseline=(m-2-1.base)]
\matrix (m) [matrix of math nodes,
             row sep=1.5pt,
             column sep=0.05cm,
             nodes={anchor=west},
             nodes in empty cells] {
  \covariatei_1  & = \exogenousi_1 \\
  \covariatei_2 &  =\frac{\covariatei_1}{2} + .1\covariatei_1^2 + .6\exogenousi_2\\
  \covariatei_3 & = .4\covariatei_1 + .1\covariatei_1^2 + .7\covariatei_2
  - \frac{\covariatei_2^2}{2} + \frac{\covariatei_2^3}{8}
  - \frac{\alpha \covariatei_2 \tanh(\exogenousi_3+1)}{2}
  + \frac{\exogenousi_3}{2} \\
};

\draw[->] (m-1-1) -- (m-2-1);
\draw[->] (m-2-1) -- (m-3-1);
\draw[->, bend right=40] (m-1-1) to (m-3-1);
\end{tikzpicture}
\]

\Cref{fig:sensitivity} presents the variation of the metrics used to falsify additivity in the true data-generating process. The charts present how the matching metrics of both observational and interventional distribution vary (MMD obs and MMD int) with $\alpha$ and how the error in counterfactual estimation varies and how the independence between $\exogenousi_3$ and $\parents{3}$ (HSIC) and between $\exogenousi_3$, $\exogenousi_2$, $\exogenousi_1$ (dHSIC) vary with $\alpha$. The value of $\alpha\in [0,1]$ determines if the DGP is additive ($\alpha=0$) or not ($\alpha>0$).

This experiment illustrates how we can reject the use of \ours, since a more flexible causal model achieves better observational matching and independence of the residuals when the noise is not additive, and this is something that can be tested based on observational data only.

\subsection{Performance on synthetic datasets}

We benchmark our approach to state-of-the-art methods on 11 synthetic graphs obtained from~\citet{geffner2022deep} under different data regimes. We conducted two different experiments, depending on whether all covariates are continuous or if there are discrete variables. All variables are standardized to obtain comparable counterfactual errors across variables. All experiments use the same amount of data for training and testing.

\subsubsection{Experiment on continuous data}

We compare the performance of our proposed approaches (\ours with KAN and KAAM) to the following methods: DBCM ~\citep{chao2024dcm} (diffusion-based), \methodname{causal flows}~\citep{javaloy2023causal}  (from now CFlow)  and ANM~\citep{hoyer2009anm} (Additive Noise Model, which selects the best performing method among several classical machine learning tools using regression metrics~\citep{dowhy_gcm}). Both DBCM and CFlow are universal density approximators, although they need, in principle, larger sample sizes to train~\citep{khemakhem2021caf}.

Once the best hyperparameters (details in
App. B in supplementary material) are obtained, we proceed to train our final model and validate the results. The validation is done along three main axes: observational, interventional and counterfactual metrics. For \emph{observational metrics}, we compare the RF accuracy and the MMD between the generated distribution and the validation samples. To measure \emph{interventional prediction},  we generate six different interventions over variables $\covariatei_1$ and $\covariatei_2$ (always present in the graph) at standardized values $\covariatei_\indexcovariate=\{-1, 0, 1\}$. The corresponding interventional distributions were evaluated using RF accuracy and MMD with the validation samples. Finally, for \emph{counterfactual inference}, we use the validation samples to compute the counterfactual values of each sample using the same interventions defined in the previous point. The counterfactual error is evaluated using MAE ($\text{MAE}_\text{CF}$), compared with the values of the true SCM.

\subsubsection{Experiment on mixed data}
\begin{table*}[ht]
\scriptsize
\setlength{\tabcolsep}{3pt}
\centering
\begin{tabular}{lcccccc|cccccc}
\hline
& \multicolumn{6}{c}{Additive} & \multicolumn{6}{c}{Nonadditive} \\
\cmidrule(lr){2-7} \cmidrule(lr){8-13}
Model 
& $\text{RF}_\text{acc}^\text{obs}$ 
& $\text{MMD}_\text{obs}$ 
& $\text{RF}_\text{acc}^\text{int}$ 
& $\text{MMD}_\text{int}$ 
& $\text{MAE}_\text{CF}$ 
& p-value
& $\text{RF}_\text{acc}^\text{obs}$ 
& $\text{MMD}_\text{obs}$ 
& $\text{RF}_\text{acc}^\text{int}$ 
& $\text{MMD}_\text{int}$ 
& $\text{MAE}_\text{CF}$ 
& p-value \\

& & $\times 10^{-3}$ & & $\times 10^{-3}$ & $\times 10^{-2}$ &
& & $\times 10^{-3}$ & & $\times 10^{-3}$ & $\times 10^{-2}$ & \\
\hline

KAN 
& $0.54_{0.02}$ & $1.36_{0.41}$ & $0.54_{0.02}$ & $1.95_{0.67}$ & $3.50_{1.40}$ & $0.180^*$
& $0.67_{0.13}$ & $5.27_{9.20}$ & $0.67_{0.14}$ & $6.09_{8.75}$ & $3.42_{1.54}$ & $<1\text{e-}3$ \\

KAAM 
& $0.54_{0.01}$ & $1.35_{0.40}$ & $0.55_{0.01}$ & $1.96_{0.74}$ & $3.59_{1.35}$ & $0.180^*$
& $0.68_{0.14}$ & $5.97_{9.60}$ & $0.68_{0.14}$ & $6.61_{9.12}$ & $3.56_{1.68}$ & $<1\text{e-}3$ \\

ANM 
& $\mathbf{0.53_{0.01}}$ & $\mathbf{1.12_{0.28}}$ & $\mathbf{0.53_{0.01}}$ & $\mathbf{1.30_{0.21}}$ & $\mathbf{1.28_{0.43}}$ & $\underline{0.500}^*$
& $0.61_{0.09}$ & $1.71_{0.83}$ & $0.61_{0.09}$ & $2.31_{1.46}$ & $1.76_{1.13}$ & $0.007$ \\

DBCM 
& $0.56_{0.01}$ & $2.78_{0.73}$ & $0.56_{0.01}$ & $2.38_{0.54}$ & $3.03_{0.47}$ & $<1\text{e-}3$
& $0.59_{0.04}$ & $2.61_{0.78}$ & $0.60_{0.04}$ & $2.34_{0.60}$ & $2.67_{0.78}$ & $0.157^*$ \\

CFlow 
& $0.53_{0.01}$ & $1.42_{0.26}$ & $0.53_{0.01}$ & $1.53_{0.29}$ & $1.69_{0.52}$ & $0.396^*$
& $\mathbf{0.55_{0.03}}$ & $\mathbf{1.42_{0.35}}$ & $\mathbf{0.59_{0.08}}$ & $\mathbf{1.84_{0.96}}$ & $\mathbf{1.84_{0.99}}$ & $\underline{0.500}^*$ \\

\hline
\end{tabular}

\caption{
Results on the Sachs dataset for additive and nonadditive noise settings. Metrics report averages over runs ($\text{mean}_\text{std}$). Lower is better for all metrics. The last column of each block reports p-values from post-hoc Wilcoxon tests aggregated over all metrics. The Friedman test p-value is always $<1\text{e-}3$, indicating significant differences among methods overall. Underline denotes the best method. $^*$ indicates no statistically significant difference ($p>0.05$).
}
\label{tab:sachs_results_unified}

\end{table*}

For the discrete case, we proceed in a similar fashion, with some key differences. First, we adapt the validation of models designed for continuous variables (CFlow and DBCM) by \itemi adding a small scale noise to the training data (standard deviation $0.01$); and \itemii post-processing the data generated for the discrete variables, by rounding. Second, we add \oursmix to the evaluation. Third, we discretize one variable, $\covariatei_3  \in \{0, 1, 2\}$, in the true DGP of~\citet{geffner2022deep} by sampling a ternary variable with probability vectors: $[0.8, 0.1, 0.1]$ if $\covariatei_3 <-1$; $[0.1, 0.8, 0.1]$ if $-1<\covariatei_3 <1$; and $[0.1, 0.1, 0.8]$ if $\covariatei_3 >1$. Finally, we do not evaluate counterfactual prediction, since point-valued counterfactuals are not identifiable.

\begin{figure*}[ht]
    \centering
    \includegraphics[width=0.70\linewidth]{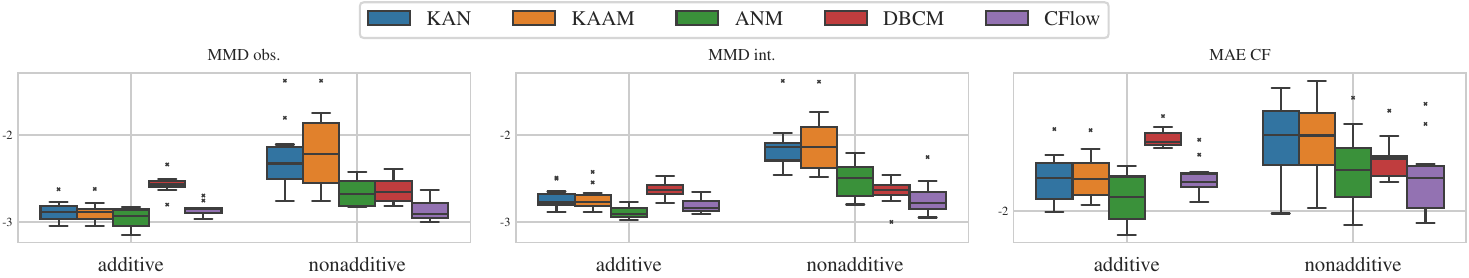}
    \caption{Boxplots of $\log_{10}$ of observational MMD, interventional MMD and CF MAE on Sachs' semisynthetic dataset. While in the additive setting, all models achieve similar performance, \ours worsen significantly in the nonadditive setting.}
    \label{fig:sachs_results}
\end{figure*}

\textbf{Results}.
The results for 1000 samples can be observed in \cref{tab:results_unified}, showing close performance of all the models, and the p-values comparing the models in all the metrics together,  using the procedure of~\citet{demvsar2006statistical}. P-values show that \itemi in the continuous case, ANM is the best model, while the two variants of \ours are not rejectable as the best models for type I error probability: $\alpha=0.05$, and \itemii in the discrete case,  \oursmix are the best models across all metrics and realizations, supporting the proposal of using mixed models with discrete data.  Note that CFlow and DBCM perform slightly worse because the number of samples is low and the true DGPs of~\citet{geffner2022deep} lie within the ANM class.

We also evaluate the impact of the number of samples on the metrics. Hence, for each model and dataset, we compute all metrics using $1000$, $100$ and $10$ training (and test) samples. When the number of samples decreases, the metrics worsen, as expected: this behavior can be observed in \cref{fig:synthetic_results}.

\subsection{Performance in semi-synthetic dataset}

\begin{figure*}[ht]
    \centering
    \includegraphics[width=0.9\linewidth]{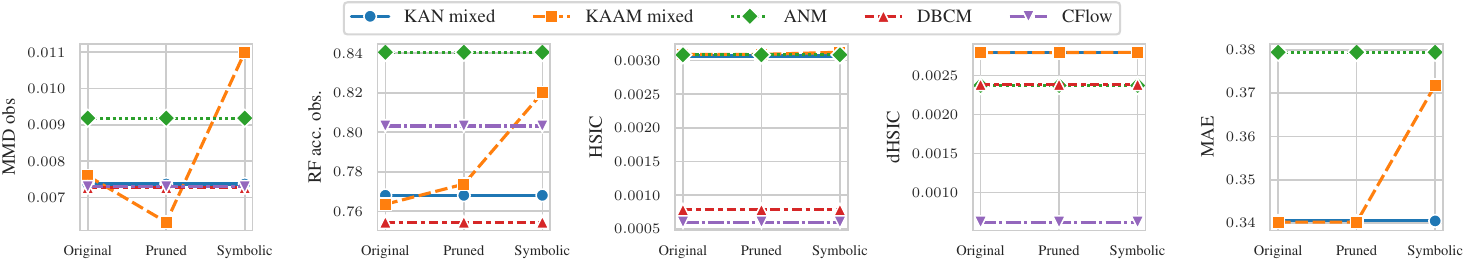}
    \caption{Observational metrics on the cardiovascular dataset as a function of the simplification stage. MAE denotes predictive error when estimating each node from its parents. DBCM and CFlow cannot compute conditional samples and, therefore, MAE is not reported. HSIC and dHSIC are evaluated only for continuous variables.}
    \label{fig:cardio_results}
\end{figure*}

Following other works~\citep{chao2024dcm, almodovar2025decaflow}, we also validate our approach on a more complex semi-synthetic dataset, created following the process procedure by~\citet{chao2024dcm}, from the original dataset of~\citet{sachs2005causal}. The dataset contains 11 variables associated with the causal graph of \cref{fig:sachs_graph}. Root nodes are sampled from the original dataset, while the other variables are generated using random equations.
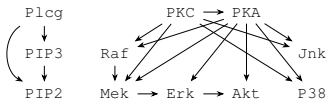
\begin{figure}[H]
    \centering
         \begin{tikzcd}[
        scale=0.2, column sep=small, row sep=small]
        \Plcg \arrow[d] \arrow[dd, bend right=60]  & &
        \PKC \arrow[r]
                         \arrow[dl]
                         \arrow[ddl]
                         \arrow[drr]
                         \arrow[ddrr]
        & \PKA \arrow[dll]
                         \arrow[ddll]
                         \arrow[ddl]
                         \arrow[dd]
                         \arrow[dr]
                         \arrow[ddr] & \\
        \pipthree \arrow[d]& \Raf \arrow[d] & & & \Jnk \\
        \piptwo & \Mek \arrow[r] 
         & \Erk \arrow[r]  & \Akt  & \Pte
    \end{tikzcd}
    \caption{Causal graph associated with Sachs' dataset.}
    \label{fig:sachs_graph}
\end{figure}
We generate $1000$ training samples and also $1000$ validation samples in both additive and nonadditive settings. The results using the same evaluation method from the previous experiment can be seen in \cref{tab:sachs_results_unified}. In the additive setting, ANM achieves the best results in all metrics, while \ours and CFlows (in particular causal masked autoregressive flows, \methodname{CausalMAF}, which is also an additive model) get comparable results, while DBCM is not competitive with this number of samples. On the other hand, in the nonadditive settings, \methodname{causal flows} (in particular, causal neural spline flows, \methodname{CausalNSF}, which is more flexible) achieves the best results, with DBCM presenting a competitive performance. These findings can also be observed in \cref{fig:sachs_results}.

\subsection{Use case: cardiovascular data}

We illustrate the interpretability capabilities of our approach on a real-world cardiovascular dataset whose goal is to estimate the probability of a \emph{major acute cardiovascular event} (MAC). The dataset, introduced by~\citet{KYRIACOU2023606}, contains continuous and binary clinical covariates together with a causal graph that we assume to be correct and is shown in \cref{fig:cardio_graph}.

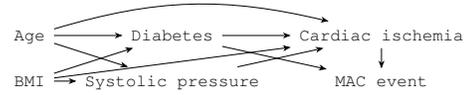
\begin{figure}[H]
    \centering
        \begin{tikzcd}[
        scale=0.2,
        column sep=small, row sep=small]
        \age \arrow[r] \arrow[dr] \arrow[rr, bend left=15] & \diabetes  \arrow[r] \arrow[dr] & \ischemia \arrow[d] \\
        \obesity \arrow[r] \arrow[ur] \arrow[urr] & \hypertension \arrow[ur] & \mac
    \end{tikzcd}
    \caption{Causal graph of cardio dataset. The original graph contained \textit{obesity} and \textit{hypertension}; we instead use their continuous counterparts \methodname{BMI} and \methodname{systolic pressure}.}
    \label{fig:cardio_graph}
\end{figure}

\begin{figure}[ht]
    \centering
    \includegraphics[width=0.7\linewidth, trim={1cm 0 3cm 0 }, clip]{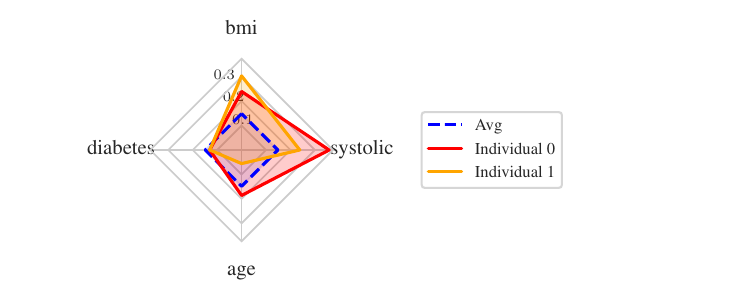}
     \includegraphics[width=0.95\linewidth]{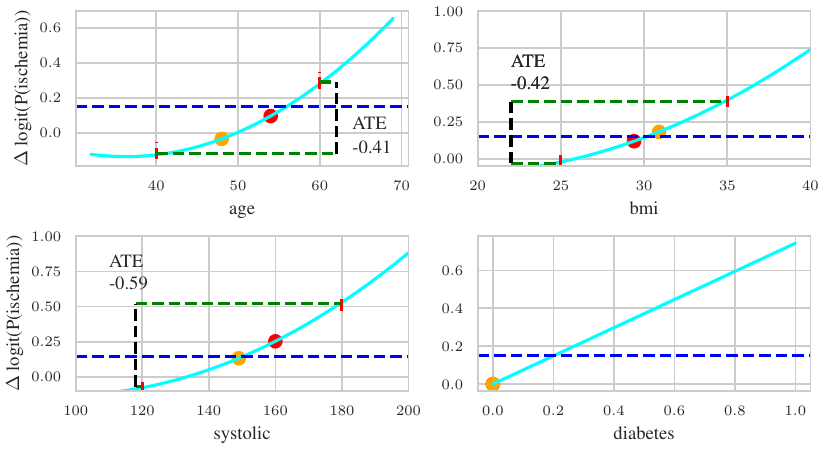}
    \caption{Interpretability analysis of the ischemia mechanism. 
(Top) PRPs showing the contribution of each parent variable to the logit for two individuals. 
(Bottom) PDPs representing the contribution $\Delta$ of each variable to the logit. The curves also allow direct computation of average treatment effects (ATE) corresponding to hypothetical interventions on each variable.}
    \label{fig:ischemia}
\end{figure}

Since true interventional distributions or counterfactual outcomes are not available, the evaluation must rely on observational diagnostics. Following the validation pipeline described in \cref{sec:method}, we train \oursmix using KAN and KAAM subnetworks and compare it with the baselines. In addition, we apply the simplification pipeline (pruning and symbolic substitution) and evaluate how these transformations affect the observational fit.

The results are summarized in \cref{fig:cardio_results}. Pruning preserves the observational performance, whereas symbolic substitution slightly degrades the metrics, illustrating the expected trade-off between predictive fidelity and interpretability. Nevertheless, \oursmix achieves competitive observational fit, obtaining the best RF accuracy and MMD among the compared models, although independence diagnostics are less favorable. This analysis supports the use of the simplified models when interpretability is a primary objective.

The key advantage of the proposed model is that the structural mechanisms can be explicitly inspected. For example, after pruning and symbolic approximation, the probability of cardiac ischemia conditioned on its parents can be expressed as
\begin{align*}
\small
\Pr(\text{ischemia}) & =
\sigma(0.055\,\overline{\text{age}}^2
+0.173\,\overline{\text{age}} \\
& +0.031\,\overline{\text{BMI}}^2
+0.114\,\overline{\text{BMI}}
\\
&
+0.046\,\overline{\text{systolic}}^2
+0.155\,\overline{\text{systolic}}\\
& +0.743\,\text{diabetes}
-1.871),
\end{align*}
where $\sigma$ represents the sigmoid function and the overline variables are standardized. 
This expression provides a transparent representation of the learned causal mechanism that can be directly audited by domain experts. For instance, the logit of the ischemia probability increases quadratically with both age and systolic pressure, a pattern that is consistent with clinical intuition: cardiovascular risk is known to grow nonlinearly with aging and with elevated blood pressure. Similarly, BMI contributes through both linear and quadratic terms, capturing increasing risk at higher levels of body mass.

Beyond the closed-form expression, the interpretability tools allow for visualizing how each variable contributes to the predicted outcome. \cref{fig:ischemia} illustrates this through two complementary visualizations. The probability radar plots (PRP) decompose the contribution of each parent variable to the logit for two randomly selected individuals. In the example shown, Individual~0 exhibits higher ischemia risk due to greater age and systolic pressure, whereas Individual~1 has elevated BMI, which also increases the predicted probability.

The probability dependence plots (PDPs) provide a complementary population-level view of the learned mechanisms. These plots display how each parent variable contributes to the logit of the ischemia probability while marginalizing over the remaining variables. Because the structural equation is explicit, these curves also allow straightforward estimation of average treatment effects (ATE) corresponding to hypothetical interventions on each variable.

For instance, the PDPs in \cref{fig:ischemia} show how changes in BMI and systolic pressure modify the logit of the ischemia probability. From these curves, the ATE of reducing BMI from 35 to 20 is approximately $-0.42$ in the logit scale, while lowering systolic pressure from 180 to 120 yields an ATE of approximately $-0.59$. These quantities correspond to population-level intervention effects estimated directly from the learned structural equation.

Importantly, these effects are not obtained from a black-box predictor but from an explicit functional representation of the mechanism. Consequently, practitioners can both visualize the nonlinear contribution of each variable and compute interpretable causal summaries such as ATEs directly from the model.

We provide the code of all the experiments for reproducibility in  \href{\codelink}{\codeshort}. We also include an online interactive interface, in which the causal effects of every variable in the Sachs' dataset and in the usecase can be explored and audited, in \href{\interfacelink}{\interfaceshort}.
\section{Concluding remarks}
\label{sec:conclusion}

This work investigated the integration of Kolmogorov-Arnold networks within causal generative modeling for tabular data. We proposed \ours, a causal generative model that preserves the semantics of structural causal models while enabling direct inspection of the learned structural mechanisms. \ours is a query-agnostic model capable of answering observational, interventional, and counterfactual queries, while supporting symbolic simplification, pruning, and functional visualization.

Empirically, the results presented in \cref{sec:experiments} demonstrate that, under additive data-generating processes, \ours achieves performance comparable to optimal additive noise models and competitive with state-of-the-art deep causal generative models in terms of observational fit, interventional distribution matching, and counterfactual accuracy. In mixed continuous--discrete settings, the proposed extension maintains competitive performance while preserving interpretability of the structural equations. The sensitivity analysis further illustrates that model misspecification can be detected using independence tests on inferred exogenous variables and distributional metrics, providing practical guidance for deployment under observational data alone. In semi-synthetic experiments, \ours performs strongly when the additive assumption is satisfied, while more flexible deep causal models outperform it in strongly nonadditive regimes. This trade-off highlights the central design principle of our approach: favoring structural interpretability and auditability when the data-generating process is compatible with additive mechanisms.
Finally, in the real-world cardiovascular use case, we demonstrated that the model can achieve competitive observational performance while enabling the extraction of simplified structural equations and interpretable visualizations of causal effects. These results support the feasibility of combining expressive generative modeling with functional transparency, a property that is particularly relevant for high-stakes applications where auditing and mechanistic understanding are required.

Despite its advantages, the proposed approach presents several limitations. First, the model relies on structural mechanisms that admit relatively low-complexity functional decompositions. When the true data-generating process exhibits strongly nonadditive, highly entangled, or heteroscedastic interactions, more flexible deep causal generative models can achieve superior density approximation and counterfactual accuracy, as observed in \cref{sec:experiments}. In such regimes, enforcing structured functional representations may lead to approximation bias. 
Second, interpretability is inherently tied to model simplification. Pruning and symbolic approximation improve transparency but may degrade predictive fidelity, introducing a trade-off between auditability and accuracy. While our validation pipeline mitigates this risk through independence and distributional diagnostics, it cannot guarantee that the learned mechanisms coincide with the true structural equations, particularly under model misspecification. 
Third, the methodology assumes causal sufficiency and a correctly specified causal graph. Violations of these assumptions, such as hidden confounding or graph misspecification, can invalidate the generative interpretation and the associated counterfactual estimates. Finally, although the approach scales to moderate-dimensional tabular settings, extracting symbolic expressions and visual diagnostics becomes increasingly challenging as graph size and mechanism complexity grow. Addressing these limitations requires balancing interpretability, expressivity, and scalability in future work.

Several directions for future research arise from this work. First, one may extend the architecture to increase density approximation flexibility while preserving interpretability. In particular, KAN-based normalizing flow constructions could allow more expressive transformations of the exogenous variables, including
neural spline flows~\citep{durkan2019neural} or unconstrained monotonic neural networks~\citep{wehenkel2019unconstrained}. Such extensions would require new interpretability tools to ensure that increased flexibility does not compromise auditability.
Second, alternative functional bases beyond spline parameterizations may be explored, such as Fourier expansions~\citep{li2025kolmogorov}, which could provide improved inductive biases in periodic or oscillatory mechanisms.
Third, assessing partial well-specification represents an important practical direction. Instead of requiring global model adequacy, one may restrict attention to subsets of intervention targets for which additive structure is sufficiently accurate, as suggested in recent work on partial model validation~\citep{schultheiss24assesing}. Another promising avenue is regression by dependence minimization~\citep{mooij2009regression}, enforcing independence between inferred exogenous variables and their parents to strengthen structural validity.

Finally, extending the methodology beyond causal sufficiency remains an open challenge. Incorporating proxy-based adjustments~\citep{almodovar2025decaflow} or leveraging graph transformations that yield equivalent identifiable structures~\citep{parafita2022estimand} would broaden applicability to settings with hidden confounding.

In summary, this work advances interpretable causal generative modeling by demonstrating that structured, additive neural architectures can provide competitive performance while enabling direct inspection of structural equations. We hope this contribution encourages further research at the intersection of causal modeling, generative learning, and functional interpretability.

\bibliography{ref}
\bibliographystyle{myplainnat}

\appendix

\subsection{Proofs}
\label[app]{app:sec:proofs}

\begin{proof}[Proof of \cref{prop:interventional_mixed}]
    
    \itemii Intervening in $\covariatei_\indexfour \in \parents{\indexcovariate}$ , $\doop(\covariatei_\indexfour = \sampleintervention)$, the mechanism is $\covariatei = \pred{\func[\parameters]}(\sampleintervention, \parents{\indexcovariate}_{-\indexfour}, \exogenousi_\indexcovariate)$. Conditioning on $\sampleparents{\indexcovariate}_{-\indexfour}$, and having that $\exogenousi_\indexcovariate \indep \parents{\indexcovariate}$:

    \begin{align*}
    \pdf_\parameters(\giventhat{\samplecovi_\indexcovariate}{\doop(\covariatei_\indexfour = \sampleintervention), \sampleparents{\indexcovariate}}) 
    &= \distribution(\pred{\func[\parameters]}(\sampleintervention, \sampleparents{\indexcovariate}, \exogenousi_\indexcovariate) = \samplecovi_\indexcovariate) \\
    &= \pdf_\parameters(\giventhat{\samplecovi}{\sampleintervention, \sampleparents{\indexcovariate}}).
    \end{align*}

    For descendants of $\covariatei_\indexcovariate$, with downstream mechanisms fixed, the interventional law factorizes as a mixture over $\covariatei_\indexcovariate$ with weights $\pdf_\parameters(\giventhat{\samplecovi_\indexcovariate}{\doop(\sampleintervention), \sampleparents{\indexcovariate}_{-\indexfour}})$. Convergence in these weights implies convergence in distribution of the induced descendants.
\end{proof}

\begin{proof}[Proof of \cref{prop:counterfactual_mixed}]
    Since $\descendants{\intervention}\in \gC$ For each variable $\indexcovariate \in \descendants{\intervention}$, every parent of \indexcovariate lies \itemi in \intervention, \itemii in \descendants{\intervention} or \itemiii outside \descendants{\intervention}. In the abduction-action-prediction process, given a factual point, \factual{\samplecov}, \itemi is fixed by intervention, \itemii is computed earlier in the topological evaluation and \itemiii keep their factual values. Thus, the inputs to \pred{\func[\indexcovariate]} are uniquely determined (in addition to $\pred{\sampleexogenousi}_\indexcovariate = \pred{\func[\indexcovariate]}^{-1}(\parents{\indexcovariate}, \factual{\samplecovi}_\indexcovariate)$), which gives a unique value of the counterfactual $\cfactual{\samplecovi}_{\indexcovariate, \doop(\samplecovi_\intervention)}$.
\end{proof}
\subsection{Operation of \ours}
\label[app]{app:sec:operation}

Details of training (\cref{alg:training}),  the do-operator implementation (\cref{alg:interventional} and \cref{alg:counterfactual}) and assumptions falsification (\cref{alg:falsify}). Without loss of generality, assume that \covariates are in causal order. \regularization is the regularization term of KANs.

\begin{algorithm}[ht]
\scriptsize
\caption{Training}
\label{alg:training}
\begin{algorithmic}[1]
\REQUIRE Dataset $\dataset$
\ENSURE $\{\parameters_\indexcovariate^\star\}_{\indexcovariate=1}^{\covsize}$, $\{\pred{\pdf}(\exogenousi_\indexcovariate)\}_{\indexcovariate=1}^{\covsize}$
\FOR{$\indexcovariate = 1$ to $\covsize$}
    \STATE $\lambda^\star_\indexcovariate = \arg\min_{\lambda} \left[ \min_{\parameters} \loss \right]$
    \STATE $\parameters_\indexcovariate^\star = \arg\min_{\parameters \in \Theta(\lambda^\star_\indexcovariate)} 
    \text{MSE}\!\left(\covariatei_\indexcovariate, 
    \text{KAN}^{(\indexcovariate)}_\parameters(\parents{\covariatei_\indexcovariate})\right) + \regularization$
    \STATE $\pred{\exogenous_\indexcovariate} \leftarrow \text{ObtainResiduals}(\dataset, \func[\parameters^\star])$ $//$ \cref{alg:residuals} 
    \STATE $\pred{\pdf}(\exogenousi_\indexcovariate) \leftarrow \text{EstimateDistribution}(\pred{\exogenous_\indexcovariate})$
\ENDFOR
\STATE \textbf{return} $\{\parameters_\indexcovariate^\star\}_{\indexcovariate=1}^{\covsize}$, $\{\pred{\pdf}(\exogenousi_\indexcovariate)\}_{\indexcovariate=1}^{\covsize}$
\end{algorithmic}
\end{algorithm}

\begin{algorithm}[ht]
\scriptsize
\caption{Obtain Residuals}
\label{alg:residuals}
\begin{algorithmic}[1]
\REQUIRE Dataset $\dataset$, functions $\func[\parameters] = \{\pred{\func}\}_{\indexcovariate=1}^{\covsize}$
\ENSURE $\{\{\pred{\sampleexogenousi}_\indexcovariate^{(\indexsample)}\}_{\indexsample=1}^{\samplesize}\}_{\indexcovariate=1}^{\covsize}$
\FOR{$\indexsample = 1$ to $\samplesize$}
    \FOR{$\indexcovariate = \covsize$ down to $1$}
        \STATE $\pred{\sampleexogenousi}_\indexcovariate^{(\indexsample)} \leftarrow 
        \samplecovi_\indexcovariate^{(\indexsample)} -
        \pred{\func}(\parents{\indexcovariate}^{(\indexsample)})$
    \ENDFOR
\ENDFOR
\STATE \textbf{return} $\{\{\pred{\sampleexogenousi}_\indexcovariate^{(\indexsample)}\}\}_{\indexcovariate,\indexsample}$
\end{algorithmic}
\end{algorithm}

\begin{algorithm}[ht]
\scriptsize
\caption{Falsify Assumptions}
\label{alg:falsify}
\begin{algorithmic}[1]
\REQUIRE Dataset $\dataset$, functions $\func[\parameters]$, intervention set $\intervention$
\ENSURE $\{\text{p-value}_\indexcovariate\}_{\indexcovariate=1}^{\covsize}$, MMD, $\text{RF}_{\text{acc}}$
\STATE $\{\{\pred{\sampleexogenousi}_\indexcovariate^{(\indexsample)}\}\} \leftarrow \text{ObtainResiduals}(\dataset, \func[\parameters])$$//$ \cref{alg:residuals} 
\FOR{$\indexcovariate = 1$ to $\covsize$}
    \STATE $\text{stat}_\indexcovariate \leftarrow 
    \text{HSIC}\!\left(\{\pred{\sampleexogenousi}_\indexcovariate^{(\indexsample)}\}_{\indexsample=1}^{\samplesize},
    \{\parents{\indexcovariate}^{(\indexsample)}\}_{\indexsample=1}^{\samplesize}\right)$
\ENDFOR
\FOR{$\indexcovariate \in \{1,\dots,\covsize\}\setminus\intervention$}
    \STATE Sample $\{\sampleexogenousi_\indexcovariate^{(\indexthree)}\}_{\indexthree=1}^{M} \sim \pred{\pdf}(\exogenousi_\indexcovariate)$
    \STATE $\samplecovi_\indexcovariate^{(\indexthree)} \leftarrow 
    \pred{\func}(\parents{\indexcovariate}^{(\indexthree)}) +
    \sampleexogenousi_\indexcovariate^{(\indexthree)}$
\ENDFOR
\STATE $\text{MMD} \leftarrow \text{MMD}\left(\dataset, \{\{\samplecovi_\indexcovariate^{(\indexthree)}\}_{\indexcovariate=1}^{\covsize}\}_{\indexthree=1}^{M}\right)$
\STATE Train RF to discriminate real vs generated samples and compute $\text{RF}_{\text{acc}}$
\STATE \textbf{return} $\{\text{stat}_\indexcovariate\}$, MMD, $\text{RF}_{\text{acc}}$
\end{algorithmic}
\end{algorithm}

\begin{algorithm}[ht]
\scriptsize
\caption{Interventional Sampling}
\label{alg:interventional}
\begin{algorithmic}[1]
\REQUIRE Dataset $\dataset$ in causal order, functions $\func[\parameters]$, intervention values $\{\sampleintervention_\indexcovariate\}_{\indexcovariate \in \intervention}$, $M$
\ENSURE $\{\{\samplecovi_\indexcovariate^{(\indexthree)}\}_{\indexcovariate=1}^{\covsize}\}_{\indexthree=1}^{M}$
\FOR{$\indexcovariate \in \intervention$}
    \STATE $\samplecovi_\indexcovariate \leftarrow \sampleintervention_\indexcovariate$
\ENDFOR
\FOR{$\indexcovariate \in \{1,\dots,\covsize\}\setminus\intervention$}
    \STATE Sample $\{\sampleexogenousi_\indexcovariate^{(\indexthree)}\}_{\indexthree=1}^{M} \sim \pred{\pdf}(\exogenousi_\indexcovariate)$
    \STATE $\samplecovi_\indexcovariate^{(\indexthree)} \leftarrow 
    \pred{\func}(\parents{\indexcovariate}^{(\indexthree)}) +
    \sampleexogenousi_\indexcovariate^{(\indexthree)}$
\ENDFOR
\STATE \textbf{return} generated interventional samples
\end{algorithmic}
\end{algorithm}

\begin{algorithm}[ht]
\scriptsize
\caption{Counterfactual Inference}
\label{alg:counterfactual}
\begin{algorithmic}[1]
\REQUIRE Factual covariates $\factual{\covariates}$ in causal order, functions $\func[\parameters]$, intervention set $\intervention$
\ENSURE Counterfactual covariates $\{\samplecovi_\indexcovariate\}_{\indexcovariate=1}^{\covsize}$
\FOR{$\indexcovariate = \covsize$ down to $1$}
    \STATE $\factual{\pred{\sampleexogenousi}_\indexcovariate} \leftarrow 
    \factual{\covariatei}_\indexcovariate -
    \func[\parameters](\factual{\parents{\indexcovariate}})$
\ENDFOR
\FOR{$\indexcovariate \in \intervention$}
    \STATE $\samplecovi_\indexcovariate \leftarrow \sampleintervention_\indexcovariate$
\ENDFOR
\FOR{$\indexcovariate \in \{1,\dots,\covsize\}\setminus\intervention$}
    \STATE $\samplecovi_\indexcovariate \leftarrow 
    \func[\parameters](\cfactual{\parents{\indexcovariate}}) +
    \factual{\pred{\sampleexogenousi}_\indexcovariate}$
\ENDFOR
\STATE \textbf{return} $\{\samplecovi_\indexcovariate\}_{\indexcovariate=1}^{\covsize}$
\end{algorithmic}
\end{algorithm}

\subsection{Hyperparameter search}
\label[app]{app:sec:hyperparameters}

The experiments were conducted after a hyperparameter search. We include here the specific grid that we used to increase reproducibility.

\begin{itemize}
    \item KAN: Hidden dimension size ($0,5$); Grid size ($1, 5, 10$); Learning rate ($1e-3, 1e-3, 1e-4$), L1 regressor strength ($0.1, 0.01, 0.001$), Use of multiplicative nodes (yes or no, binary option). 
    \item DBCM: Number of training epochs ($100, 200, 500$); Learning rate ($1e-2, 1e-3, 1e-4$); Batch size ($32, 64, 128$), Hidden dimension size ($32, 64, 128$).
    \item CFlow: Flow type (CausalNSF or CausalMAF); Hidden dimension sizes ($(32, 32)$, $(64, 64)$, $(64, 64, 64)$, $(128, 128)$); Learning rates ($1e-2, 1e-3, 1e-4, 1e-5$); Scheduler (None or plateau); Number of bins ($4, 8, 16$).
\end{itemize}

We then select, for each child, the best hyperparameter combination based on the RF accuracy between the generated and the observational data for the child and all covariates in the graph. In order to find the best KAAM hyperparameters, we reuse the results of the KAN grid search, and keep the best hyperparameters per child among those who had no hidden dimension (hidden dimension size is $0$) and do not make use of multiplicative nodes. 

For DBCM and CFlow,
we select based on RF accuracy between the generated and the observational data.
Finally, for ANM, the library used does an automatic search to find the best model in terms of regression performance. 

\subsection{Benchmark graphs description \label{app:sec:equations}}

We now introduce the $11$ synthetic graphs used for benchmarking in \cref{sec:experiments} of this work. The graphs are based on \citep{geffner2022deep}, and are designed to provide a combination of linear and non-linear equations. All exogenousi variables $\exogenousi_{\indexone}$ follow a standard Gaussian distribution $\normal(0,1)$ and we standardize all variables before generating the next one.

\textbf{3 Chain Linear}

\begin{equation*}
    \begin{cases}
        \covariatei_1 &= \func[1](\exogenousi_1) = \exogenousi_1 \\
        \covariatei_2 &= \func[2](\covariatei_1, \exogenousi_2) = 10\cdot \covariatei_1 - \exogenousi_2 \\
        \covariatei_3 &= \func[3](\covariatei_2, \exogenousi_3) = 0.25 \cdot \covariatei_2 + 2\cdot \exogenousi_3
    \end{cases}
\end{equation*}

\textbf{3 Chain non Linear}

\begin{equation*}
    \begin{cases}
        \covariatei_1 &= \func[1](\exogenousi_1) = \exogenousi_1 \\
        \covariatei_2 &= \func[2](\covariatei_1, \exogenousi_2) = e^{0.5 \cdot \covariatei_1} +0.25 \cdot \exogenousi_2 \\
        \covariatei_3 &= \func[3](\covariatei_2, \exogenousi_3) = \frac{\left(\covariatei_2 - 5 \right)^3}{15} + \exogenousi_3
    \end{cases}
\end{equation*}

\textbf{4 Chain Linear}

\begin{equation*}
    \begin{cases}
        \covariatei_1 &= \func[1](\exogenousi_1) = \exogenousi_1 \\
        \covariatei_2 &= \func[2](\covariatei_1, \exogenousi_2) = 5 \cdot \covariatei_1 - \exogenousi_2 \\
        \covariatei_3 &= \func[3](\covariatei_2, \exogenousi_3) = -0.5 \cdot \covariatei_2 - 1.5 \cdot \exogenousi_3 \\
        \covariatei_4 &= \func[4](\covariatei_3, \exogenousi_4) = \covariatei_3 + \exogenousi_4
    \end{cases}
\end{equation*}

\textbf{5 Chain Linear}

\begin{equation*}
    \begin{cases}
        \covariatei_1 &= \func[1](\exogenousi_1) = \exogenousi_1 \\
        \covariatei_2 &= \func[2](\covariatei_1, \exogenousi_2) = 10 \cdot \covariatei_1 - \exogenousi_2 \\
        \covariatei_3 &= \func[3](\covariatei_2, \exogenousi_3) = 0.25 \cdot \covariatei_2 + 2 \cdot \exogenousi_3 \\
        \covariatei_4 &= \func[4](\covariatei_3, \exogenousi_4) = \covariatei_3 + \exogenousi_4 \\
        \covariatei_5 &= \func[5](\covariatei_4, \exogenousi_5) = -\covariatei_4 + \exogenousi_5
    \end{cases}
\end{equation*}

\textbf{Collider Linear}

\begin{equation*}
    \begin{cases}
        \covariatei_1 &= \func[1] (\exogenousi_1) = \exogenousi_1 \\
        \covariatei_2 &= \func [2](\exogenousi_2) = 2 - \exogenousi_2 \\
        \covariatei_3 &= \func[3] (\covariatei_1, \covariatei_2, \exogenousi_3) = 0.25 \cdot \covariatei_2 - 0.5 \cdot \covariatei_1 + 0.5\cdot \exogenousi_3
    \end{cases}
\end{equation*}

\textbf{Fork Linear}

\begin{equation*}
    \begin{cases}
        \covariatei_1 &= \func[1](\exogenousi_1) = \exogenousi_1 \\
        \covariatei_2 &= \func[2](\exogenousi_2) = 2 - \exogenousi_2 \\
        \covariatei_3 &= \func[3](\covariatei_1, \covariatei_2, \exogenousi_3) = 0.25 \cdot \covariatei_2 - 1.5 \cdot \covariatei_1 + 0.5\cdot \exogenousi_3 \\
        \covariatei_4 &= \func[4](\covariatei_3, \exogenousi_4)= \covariatei_3 + 0.25 \cdot \exogenousi_4
    \end{cases}
\end{equation*}

\textbf{Fork non Linear}

\begin{equation*}
    \begin{cases}
        \covariatei_1 &= \func[1](\exogenousi_1) = \exogenousi_1 \\
        \covariatei_2 &= \func[2](\exogenousi_2) = \exogenousi_2 \\
        \covariatei_3 &= \func[3](\covariatei_1, \covariatei_2, \exogenousi_3) = \frac{4}{1 + e^{-\covariatei_1 - \covariatei_2}} - \covariatei_2^2 + 0.5 \cdot \exogenousi_3 \\
        \covariatei_4 &= \func[4](\covariatei_3, \exogenousi_4)= \frac{20}{1+e^{0.5\cdot \covariatei_3^2 - \covariatei_3}} + \exogenousi_4
    \end{cases}
\end{equation*}

\textbf{Simpson non Linear}

\begin{equation*}
    \begin{cases}
        \covariatei_1 &= \func[1](\exogenousi_1) = \exogenousi_1 \\
        \covariatei_2 &= \func[2](\covariatei_1, \exogenousi_2) = \log \left(1 + e^{1-\covariatei_1} \right) + \sqrt{\frac{3}{20}} \exogenousi_2 \\
        \covariatei_3 &= \func[3](\covariatei_1, \covariatei_2, \exogenousi_3) = \tanh (2 \cdot \covariatei_2) + 1.5 \cdot \covariatei_1 - 1 + \tanh (\exogenousi_3) \\
        \covariatei_4 &= \func[4](\covariatei_3, \exogenousi_4)= \frac{\covariatei_3 - 4}{5} + 3+  \frac{\exogenousi_4}{\sqrt{10}}
    \end{cases}
\end{equation*}

\textbf{Simpson Symprod}

\begin{equation*}
    \begin{cases}
        \covariatei_1 &= \func[1](\exogenousi_1) = \exogenousi_1 \\
        \covariatei_2 &= \func[2](\covariatei_1, \exogenousi_2) = 2 \cdot \tanh (2 \cdot \covariatei_1) + \frac{\exogenousi_2}{\sqrt{10}} \\
        \covariatei_3 &= \func[3](\covariatei_1, \covariatei_2, \exogenousi_3) = 0.5 \cdot \covariatei_1 \cdot \covariatei_2 + \frac{\exogenousi_3}{\sqrt{2}} \\
        \covariatei_4 &= \func[4](\covariatei_1, \exogenousi_4)= \tanh (1.5 \cdot \covariatei_1) + \sqrt{\frac{3}{10}} \cdot \exogenousi_4
    \end{cases}
\end{equation*}

\textbf{Triangle Linear}

\begin{equation*}
    \begin{cases}
        \covariatei_1 &= \func[1](\exogenousi_1) = \exogenousi_1 \\
        \covariatei_2 &= \func[2](\covariatei_1, \exogenousi_2) = 10 \cdot \covariatei_1 - \exogenousi_2 \\
        \covariatei_3 &= \func[3](\covariatei_1, \covariatei_2, \exogenousi_3) = 0.5 \cdot \covariatei_2 + \covariatei_1 + \exogenousi_3
    \end{cases}
\end{equation*}

\textbf{Triangle non Linear}

\begin{equation*}
    \begin{cases}
        \covariatei_1 &= \func[1](\exogenousi_1) = \exogenousi_1 \\
        \covariatei_2 &= \func[2](\covariatei_1, \exogenousi_2) = 2 \cdot \covariatei_1^2 + \exogenousi_2 \\
        \covariatei_3 &= \func[3](\covariatei_1, \covariatei_2, \exogenousi_3) = \frac{20}{1 + e^{-\covariatei_2^2 + \covariatei_1}} + \exogenousi_3
    \end{cases}
\end{equation*}

Note that all of the above graphs are continuous. In order to obtain their mixed counterparts, we chose to always discretize $\covariatei_3$ by using the following procedure:
\begin{itemize}
    \item First, we standardize $\covariatei_3$.
    \item Then, we sample a discrete ternary variable, with values $\{0, 1, 2\}$, with the following probability vector:
    \begin{itemize}
        \item If $\covariatei_3 <-1$, the probability vector is $[0.8, 0.1, 0.1]$.
        \item If $\covariatei_3 > 1$, the probability vector is $[0.1, 0.1, 0.8]$.
        \item In any other case, the probability vector is $[0.1, 0.8, 0.1]$.
    \end{itemize}
\end{itemize}

Note that the procedure is not deterministic because we sample a categorical random variable, hence, we introduce additional noise when discretizing the covariatei.

\subsection{Training times}
Here we report training times for completeness, in \cref{tab:time_all_experiments}.
\begin{table}[ht]
\scriptsize
\setlength{\tabcolsep}{4pt}
\centering
\begin{tabular}{lcccc}
\hline
& Continuous & Discrete & Sachs Additive & Sachs Nonadditive \\
\hline
KAN & $17.71_{20.5}$ & $37.88_{52.5}$ & $374.90_{53.51}$ & $385.17_{66.42}$ \\
KAAM & $13.79_{5.6}$ & $3.36_{2.98}$ & $356.39_{20.92}$ & $345.11_{49.96}$ \\
ANM & $0.02_{0.0}$ & $0.02_{0.02}$ & $0.28_{0.25}$ & $0.20_{0.21}$ \\
DBCM & $11.19_{8.3}$ & $6.74_{6.28}$ & $502.45_{120.63}$ & $483.95_{134.50}$ \\
CFlow & $3.12_{2.8}$ & $2.92_{2.20}$ & $1.49_{0.51}$ & $9.33_{9.12}$ \\
$\text{KAN}_\text{mix}$ & $-$ & $7.45_{33.93}$ & $-$ & $-$ \\
$\text{KAAM}_\text{mix}$ & $-$ & $10.93_{7.96}$ & $-$ & $-$ \\
\hline
\end{tabular}
\caption{
Training time in seconds across experiments, reported as averages over datasets or runs ($\text{mean}_\text{std}$). Continuous and discrete correspond to the synthetic experiments.
}
\label{tab:time_all_experiments}
\end{table}

\subsection{Biographies}
\begin{IEEEbiography}
[{\includegraphics[width=0.9in,height=1.125in,clip,keepaspectratio]{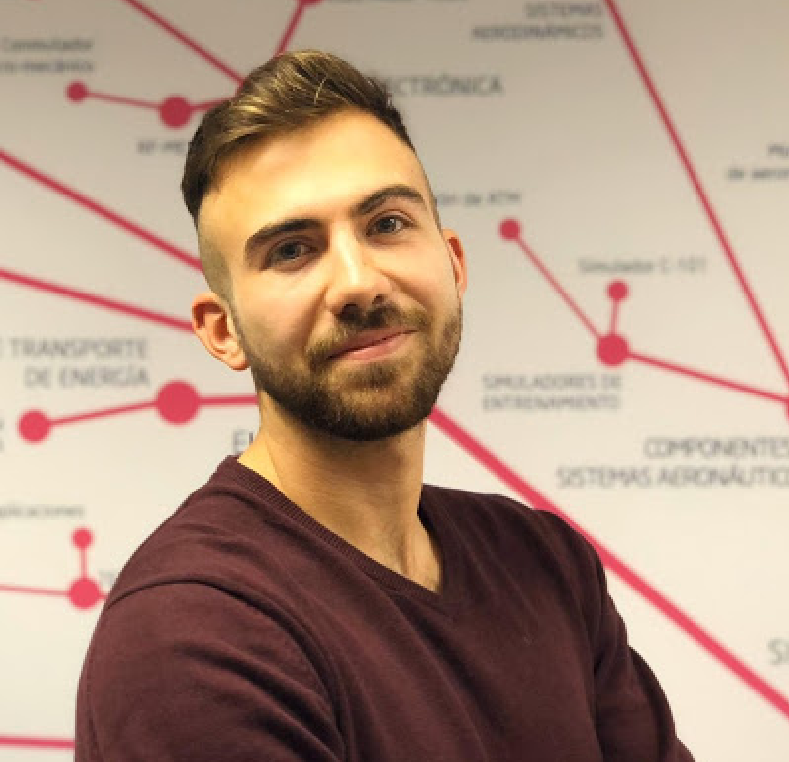}}]{ALEJANDRO ALMODOVAR} received the B.S. degree and the
M.Sc. degree in communications engineering from Universidad Politécnica de Madrid (UPM), Madrid, Spain, in
2020 and 2022, respectively.
Currently, he is a Ph.D. student whose work focuses on
causal inference with deep learning, with
applications in medicine and healthcare, among others.
\end{IEEEbiography}
\begin{IEEEbiography}
[{\includegraphics[width=0.9in,height=1.125in,clip,keepaspectratio]{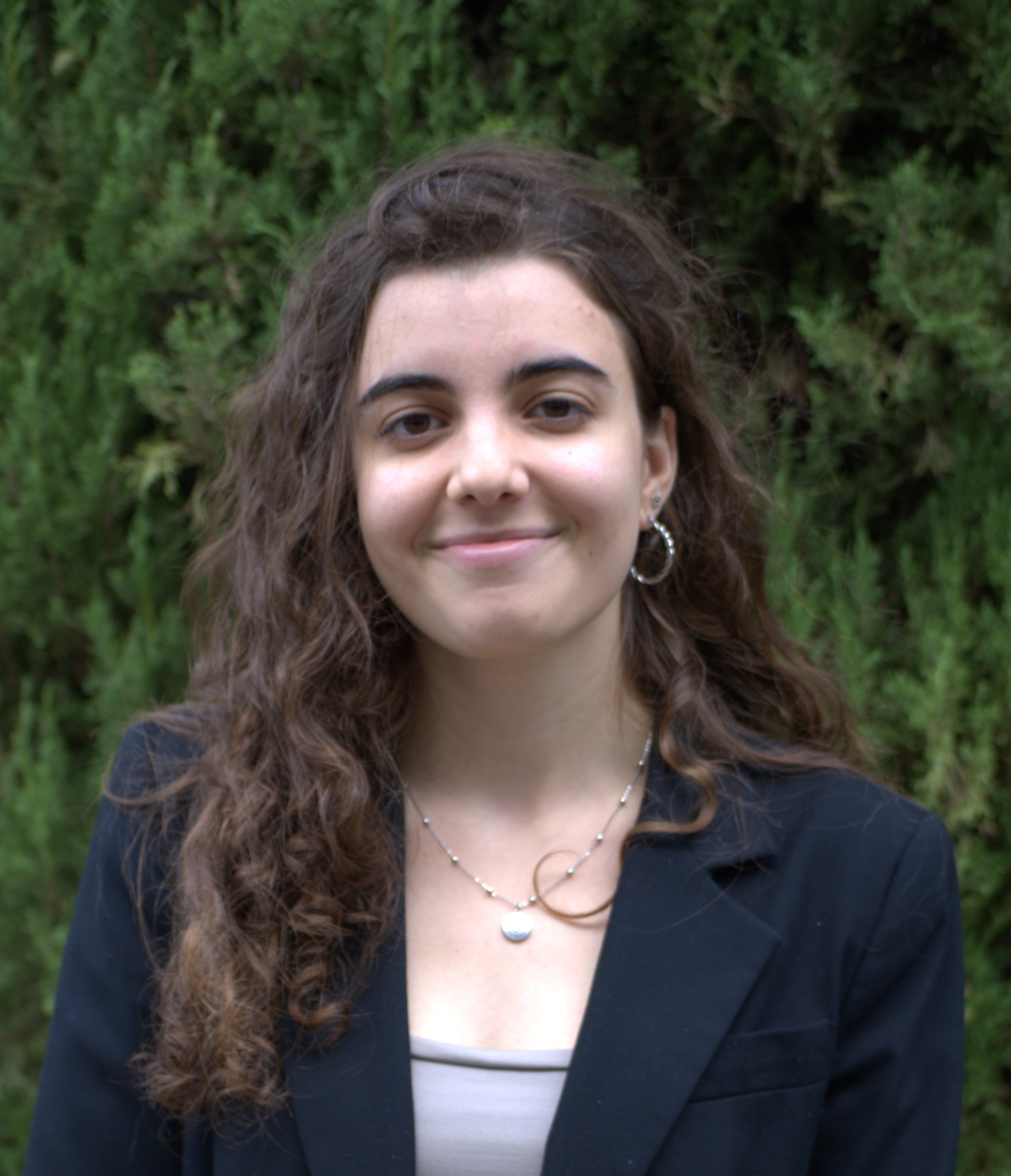}}]{MAR ELIZO}
received the B.S. degree in Computer Engineering from the Universidad Carlos III de Madrid (UC3M) in 2023, and the M.Sc. degree in Artificial Intelligence, Pattern Recognition, and Digital Imaging from the Universitat Politècnica de València (UPV) in 2024. She is currently pursuing a Ph.D. degree at Universidad Politécnica de Madrid (UPM), focusing on deep learning and causal inference applications in healthcare.
\end{IEEEbiography}
\begin{IEEEbiography}
[{\includegraphics[width=0.9in,height=1.125in,clip,keepaspectratio]{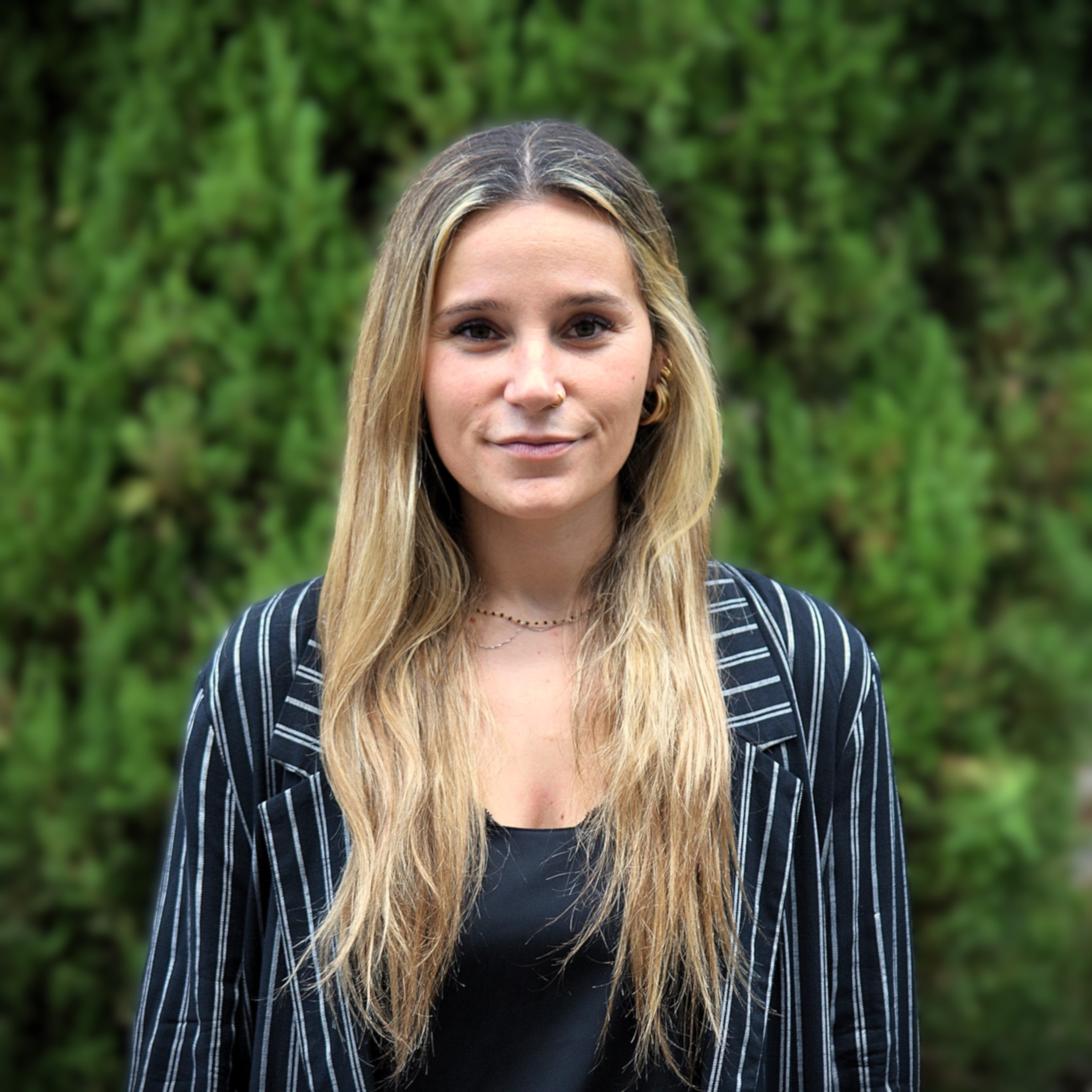}}]{PATRICIA A. APELLÁNIZ}
received the B.S. degree in Telecommunication Engineering from the Universidad Autónoma de Madrid, Spain, in 2018, and the M.Sc. and Ph.D degrees in Telecommunication Engineering from the Universidad Politécnica de Madrid (UPM), Spain (2020 and 2025, respectively). She is currently a researcher and teaching collaborator at UPM. Her research interests include deep learning for healthcare and signal processing.
\end{IEEEbiography}
\begin{IEEEbiography}[{\includegraphics[width=0.9in,height=1.125in,clip,keepaspectratio]{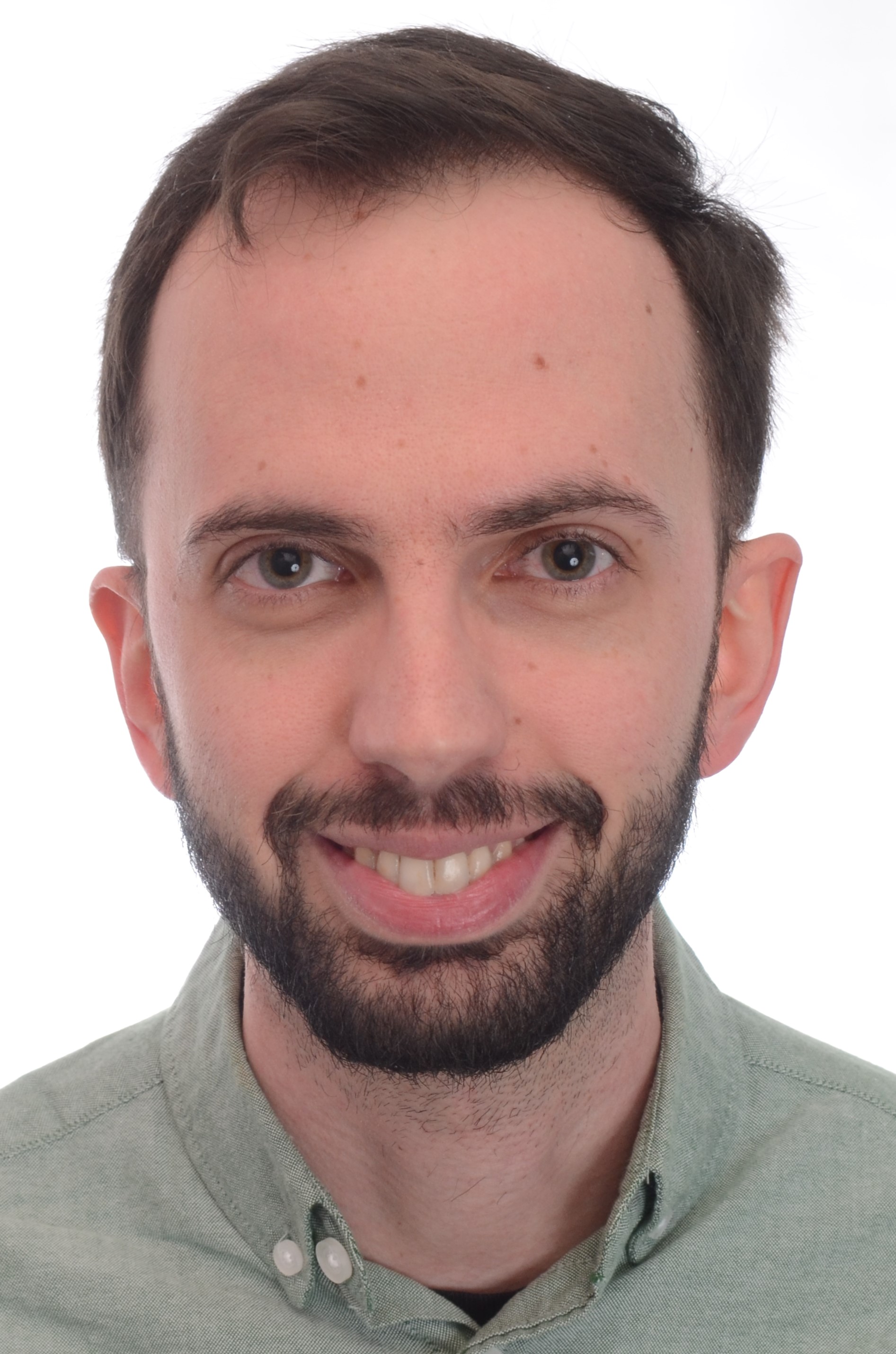}}]{JUAN PARRAS } received the B.S. degree in telecommunications engineering from the Universidad de Jaén in 2014, and the M.Sc. and Ph.D. degrees in telecommunications engineering from the Universidad Politécnica de Madrid (UPM) in 2016 and 2020, respectively. He is currently an Assistant Professor with UPM. His research interests include deep generative models, deep reinforcement learning, game theory, and optimization with health and communications applications.
\end{IEEEbiography}
\begin{IEEEbiography}[{\includegraphics[width=0.9in,height=1.125in,clip,keepaspectratio]{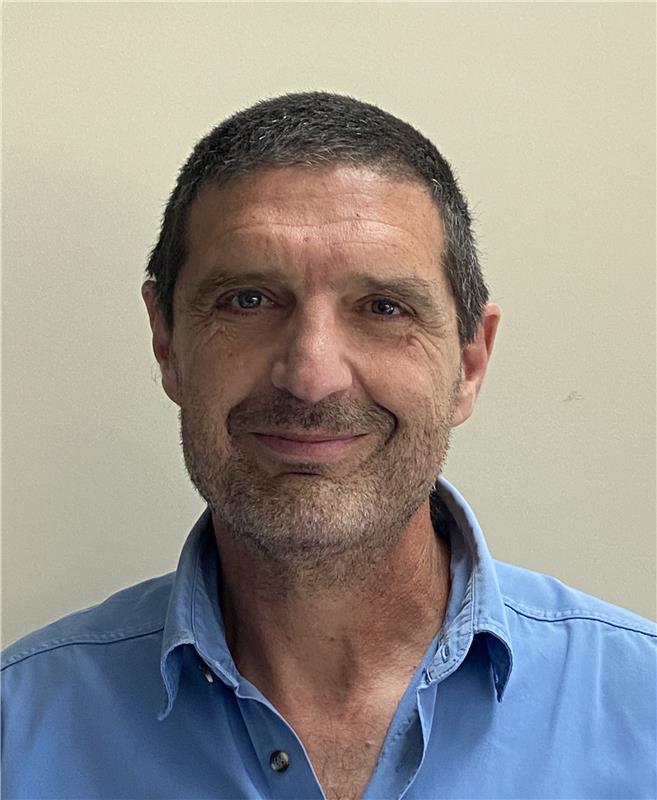}}]{SANTIAGO ZAZO } is currently a Dr. Engineer with the Universidad Politécnica de Madrid (UPM), since 1995. In 1998, he joined UPM, where he is currently a Full Professor in signal theory and communications. More recently, he has been mostly focused on deep learning with medical applications, distributed optimization, game theory, and reinforcement learning. He is the author/co-author of more than 40 journal articles and about 200 conference papers. His main research interest includes signal processing.
\end{IEEEbiography}

\end{document}